\def\comments{0}
\newcommand{\nup}{n_{\mathrm{up}}}
\newcommand{\Atrain}{A_{\mathrm{train}}}
\newcommand{\Aup}{A_{\mathrm{up}}}
\newcommand{\INrm}{\mathrm{IN}}
\newcommand{\OUTrm}{\mathrm{OUT}}
\newcommand{\uprm}{\mathrm{up}}
\newcommand{\restrm}{\mathrm{rest}}
\newcommand{\sgdfull}{SGD-Full\xspace}
\newcommand{\sgdnew}{SGD-New\xspace}
\newcommand{\diff}{$\textsc{ScoreDiff}$\xspace}
\newcommand{\ratio}{$\textsc{ScoreRatio}$\xspace}
\renewcommand{\epsilon}{\varepsilon}
    \newcommand{\matthewmargin}[1]{\marginpar{\tiny\sf\color{orange}{MJ: #1}}}
    \newcommand{\matthew}[1]{\matthewmargin{#1}}
    \newcommand{\jon}[1]{\marginpar{\tiny\sf\color{blue}{JU: #1}}}
    \newcommand{\alina}[1]{\marginpar{\tiny\sf\color{red}{AO: #1}}}
    \newcommand{\roxana}[1]{\roxanamargin{#1}}
    \newcommand{\matthew}[1]{}
    \newcommand{\matthewmargin}[1]{}
    \newcommand{\jon}[1]{}
    \newcommand{\alina}[1]{}
    \newcommand{\roxana}{}
\title{How to Combine Membership-Inference Attacks on \\Multiple Updated Models\thanks{Authors ordered by contribution.  MJ and SW made equal contributions.}}
\author{
Matthew Jagielski\thanks{Google, Inc.\ and Khoury College of Computer Sciences, Northeastern University}\hspace{50pt}
\and Stanley Wu\thanks{ Khoury College of Computer Sciences, Northeastern University.  Supported by NSF award CCF-1750640.}\hspace{50pt}
\and Alina Oprea\thanks{Khoury College of Computer Sciences, Northeastern University. Supported by NSF award CNS-2120603 and a gift from Apple.}
\and Jonathan Ullman\thanks{Khoury College of Computer Sciences, Northeastern University.  Supported by NSF awards CCF-1750640 and CNS-2120603 and a gift from Apple.}\hspace{50pt}
\and Roxana Geambasu\thanks{Department of Computer Science, Columbia University. Supported by NSF award CNS-2120603, U.S.\ Department of Energy award DE-SC-0001234, and fellowships from Microsoft and the Sloan Foundation.}
}
\date{\vspace{-5mm}}
\begin{document}

\maketitle	
\begin{abstract}
  A large body of research has shown that machine learning models are vulnerable to membership inference (MI) attacks that violate the privacy of the participants in the training data.  Most MI research focuses on the case of a single standalone model, while production machine-learning platforms often update models over time, on data that often shifts in distribution, giving the attacker more information.  This paper proposes new attacks that take advantage of one or more model updates to improve MI.  A key part of our approach is to leverage rich information from standalone MI attacks mounted separately against the original and updated models, and to combine this information in specific ways to improve attack effectiveness.  We propose a set of combination functions and tuning methods for each, and present both analytical and quantitative justification for various options.  Our results on four public datasets show that our attacks are effective at using update information to give the adversary a significant advantage over attacks on standalone models, but also compared to a prior MI attack that takes advantage of model updates in a related machine-unlearning setting.  We perform the first measurements of the impact of distribution shift on MI attacks with model updates, and show that a more drastic distribution shift results in significantly higher MI risk than a gradual shift. Our code is available on \href{https://github.com/stanleykywu/model-updates}{GitHub}.
\end{abstract}

\section{Introduction}
\label{sec:introduction}

Machine learning models are often trained on sensitive user data, and it is crucial to ensure that these models respect the privacy of individuals who contribute their data. However, there is now a robust body of literature demonstrating that unless explicit steps are taken to ensure privacy, these models will leak sensitive information.  In particular, as first shown more than a decade ago by Homer et al.~\cite{homer2008resolving}, even the simplest statistical models, when trained via standard techniques, will allow for \emph{membership-inference (MI) attacks}, in which an attacker can detect the presence of individuals in the training data, or in certain subsets of the training data.  MI attacks can be a concerning privacy violation on their own, if membership in the training data, or a specific subset of the training data, indicates something sensitive about the user or can be used as a step towards reconstructing training examples.  MI attacks have since become an active area of research in statistics, machine learning, and security, and we now have a rich toolkit of MI attacks~\cite{SankararamanOJH09, BunUV14, dwork2015robust}, which notably includes black-box attacks on modern large models in supervised-learning~\cite{shokri2017membership, yeom2018privacy, jagielski2020auditing, NasrSTPC21, carlini2021membership}.\jon{Any important MI cites missing?}  

Most MI research has focused on the case of a single standalone model.  However, in real machine learning workloads, models are typically \emph{updated} as new training data arrives, and attackers have the ability to observe some aspects of the model both before and after updates.  Intuitively, giving the attacker the ability to see the model before and after the update should reveal  information about the specific training examples in the update set.  To illustrate, consider the effect of model updates on MI for the simple statistical task of {\em mean estimation}.  
\begin{example} \label{ex:mean-est}
We are given a set of training examples $x_1,\dots,x_n \in \mathbb{R}^d$ and want to release its mean.  The theory of MI attacks~\cite{SankararamanOJH09, dwork2015robust} tells us that, under natural conditions on the data, we will be able to accurately infer membership if and only if $d \gg n$.  However, suppose that data arrives over time and we initially release the mean $\mu_0$ of the first $n-t$ points for some $t \ll n$, followed by an update $\mu_1$ with the mean of all $n$ points.  It is not hard to see that we can combine $\mu_0$ and $\mu_1$ to obtain the mean of just the last $t$ points, and we can perform accurate membership inference on this even when $n \gg d \gg t$, which is impossible given only $\mu_1$.
\end{example}

While the statistical models trained over user data by modern machine-learning workloads are far more complex than a simple mean, the same kind of effect is expected from repeated releases of updated models.  Such repeated releases are common.  Production training platforms, such as Amazon SageMaker~\cite{sagemaker}, Azure Machine Learning~\cite{azure}, and Tensorflow-Extended~\cite{tfx,46484}, all support automatic model updating on newly collected data to keep up with changing distributions or improve models over time. The frequency and method for model updates differs, but often an update is triggered by the arrival of a sizeable data batch, such as a day's or a week's worth of data, and involves fine-tuning an already deployed model on data from the new batch and potentially samples from previous batches~\cite{46484}. For example, a news-recommendation model may be updated daily to keep up with events in the news, and a product-recommendation model may be updated weekly to capture evolving trends.  In each case, updated models are repeatedly released for serving or are pushed to users' mobile devices or servers all around the world for faster predictions.

This paper investigates the threat of repeated model updates for an attacker who monitors their releases and wishes to infer membership of specific samples in each update dataset. We formalize the problem of membership-inference under repeated model updates in a way that supports a wide range of model update procedures, sizes of update batches, and distribution shift in the new data (Section~\ref{sec:mi_updates}). Geared toward this problem, we develop the first black-box MI attack algorithms that combine information from previously known standalone MI attacks---such as the state-of-the-art LiRA attack~\cite{carlini2021membership}---to let the adversary take advantage of access to both the original model and one or more updated models to improve MI on the update set (Section~\ref{sec:attack_algorithms}).  Our algorithms compute the standalone attack's \emph{confidence scores} separately against the original model, then against the update model(s), and combine them to obtain a confidence score for membership in the update set.  We justify the need to use detailed confidence scores information by showing that combining only the binary membership decisions does not increase the attacker's power.  We consider two different methods for combining scores, each motivated analytically by the study of a simple example.  Our analysis and experiments demonstrate that the best choice of score will depend on the specific learning algorithm being attacked. 

Some previous works have studied MI attacks involving multiple models trained on overlapping datasets~\cite{salem2020updates, ZanellaBeguilinWTRPOKB20}, for example arising from intermediate computations revealed by federated learning systems~\cite{wang2019eavesdrop, 8835269,nasr2019comprehensive} or from model unlearning~\cite{chen2020machine}.  Our work, however, is the first to study a number of aspects specific to the repeated model-update setting, including updates with sizeable batches of data, multiple updates, and the effect of distribution shift.

We evaluate our algorithms on four datasets---FMNIST, CIFAR-10, Purchase100, and IMDb---using suitable linear and neural network models.  We highlight several key contributions and conclusions:
\begin{enumerate}
\item We demonstrate that access to one or more updated models makes an attacker significantly more effective at inferring membership in the training data, compared to having only a single standalone model (e.g., on FMNIST, MI accuracy increases from 52\% without updates to 79\% with updates).  We show that this effect is more dramatic the more distribution shift occurs between the original training and the update.

\item We consider a variety of attack algorithms and tuning methods, and demonstrate both analytically and empirically that no single method is best in all situations, highlighting the need for a varied testing strategy.  We also demonstrate that our attacks are more efficient and effective than attacks designed for the related, but distinct, setting of machine unlearning~\cite{chen2020machine}.

\item We consider multiple algorithms for updating models, and show that for small update sets, using the whole dataset to update the model is less vulnerable to MI attacks compared to training on just the update set, and the opposite is true for larger update sets.  Our findings offer some guidance for practitioners employing basic defenses.

\item We audit differentially private defenses, and show that they offer strong protection with small privacy parameter, but only modest protection with large parameter.  In particular, the worst-case bounds offered by differential privacy can be close to tight in practice (within 2.0-3.6x in some settings).

\end{enumerate}

Overall, our work demonstrates that the model updates arising from production machine-learning systems significantly increase the risk of membership-inference attacks, and highlights the many subtleties that arise in constructing both attacks and defenses.

\section{Related Work} \label{sec:related}


A long line of work has recognized the privacy risks of machine learning. Early work considered very simple statistical tasks~\cite{homer2008resolving, SankararamanOJH09, BunUV14, dwork2015robust}, and black-box attacks on machine learning algorithms have were developed later~\cite{yeom2018privacy, shokri2017membership, song2021systematic}. Other types of privacy attacks have also been considered, such as training data reconstruction/extraction~\cite{salem2020updates, carlini2020extracting}, attribute inference~\cite{yeom2018privacy}. MI attacks have also been considered in white-box~\cite{leino2020stolen} and label-only settings~\cite{choo2020label, li2020membership}, and have been evaluated in unbalanced scenarios, where training points are much less common than test points~\cite{jayaraman2021revisiting}.

\mypar{Membership-Inference with Model Updates.} Our work considers an adversary seeking to run MI attacks in the setting where a model is updated repeatedly over its lifetime. A few privacy attacks on model updates have been considered in prior work, but in distinct settings compared to ours.

First, in federated learning, a decentralized model update procedure, attacks have been demonstrated to leak the label composition~\cite{wang2019eavesdrop}, attributes present in local datasets~\cite{8835269}, and MI attacks~\cite{nasr2019comprehensive}. 

Second, Salem et al.~\cite{salem2020updates} show that an attacker with query access to an initial and an updated model can perform {\em reconstruction attacks} to recover the labels and feature values of the update points. The reconstruction attack is designed for small updates to the model, and works only in the online learning setting where only new points are considered when generating the update. The differences from our work are: (1) we focus on {\em membership inference} attacks on models supporting multiple updates; (2) we develop new attacks for this setting that combine existing standalone-model MI attacks without the heavyweight construction of many shadow models; and (3) we consider a range of model update regimes, by varying the size of the update samples, the distribution shift in the new data, and the retraining procedure, including attacks for multiple updates. 

Third, Chen et al.~\cite{chen2020machine} construct MI attacks for machine unlearning, in a setting where an adversary has access to an initial model and an updated model after a set of examples are removed from training. We observe that, when updating or unlearning are performed by retraining from scratch, an adversary with access to the models before and after unlearning is related to a model update adversary having access to the models before and after updating. We perform an experimental comparison with their attacks in Section~\ref{sec:eval:q7} and find that our attacks, geared towards our specific setting, are significantly more powerful compared to theirs, which were not designed for this setting. In addition, the model updates setting considered by our work is more general in supporting multiple updates, data distribution shift under updates, and different training regimes (online learning and full retraining). 

\mypar{Memorization Attacks.} Memorization attacks against generative language models demonstrate that training data can be extracted by an adversary with black-box query access to a model~\cite{carlini2019secret, carlini2020extracting}. \cite{carlini2020extracting} generate and rank text samples from GPT-2 and use MI attacks to test that a generated sample belongs to the training data. \cite{zanella2020analyzing} show that model updates in generative language models improve memorization attacks.

\mypar{Differentially Private ML.} Privacy attacks have inspired privacy-preserving training algorithms, including defenses specifically designed to prevent MI attacks~\cite{nasr2018machine, jia2019memguard}, as well as the adoption of differential privacy~\cite{dwork2006calibrating}. Differentially private machine learning algorithms~\cite{bassily2014private, song2013stochastic, abadi2016deep, chaudhuri2011differentially} are a defense against MI attacks, and while deployments of them exist, they are still relatively rare compared to the scale of machine learning workloads at large companies.  Still, we perform an experimental evaluation of differential privacy in the context of our attacks and show that differential privacy is an effective protection at low privacy parameters, but also that our attacks can be an effective empirical audit of a differential privacy deployment.
\section{MI with Model Updates}
\label{sec:mi_updates}


\subsection{Background}

Many supervised learning algorithms exist to train machine learning models from labeled data.  In this work, we consider classification problems, where samples are taken from a data domain  $\mathcal{X}$ and the output space is a discrete set of $K$ classes $\mathcal{Y}=[K]$.\footnote{Here and throughout we use the notational shorthand $[K] = \{1,2,\dots,K\}$.} With a training algorithm, the learner typically learns some set of parameters $\theta$ which are used to evaluate the model function $f$ and minimize a loss function. 



In our empirical evaluation, we consider  logistic regression and various neural network models. A neural network is a model $f(x)$ which is computed as the chain of $L$ layers $g_L \circ g_{L-1} \circ \cdots g_1 \circ x$, where each layer function $g_i$ takes its input $a_i$ and computes $h(W_i a_i + b_i)$, where $W_i, b_i$ are trainable weights (the parameters $\theta$ are the weights from every layer) and $h$ is a nonlinear activation function. A common activation function is the ReLU $h(x)=\max(0, x)$~\cite{glorot2011deep}. In tabular data, it is common to use simple models, such as logistic regression (where $L=1$) or small neural networks (small $L$). Image data typically uses deeper networks which use \emph{convolutions}~\cite{lecun1998gradient}, a constraint on the weight matrices which exploits the structure of images to significantly improve performance. For classification, networks typically use a softmax output, which produces probability values for each of the $K$ classes.

Training models typically proceeds by using gradient descent on a given dataset $D$. This requires defining a loss function $\ell$, which measures the model's performance on a given data point $x, y$. After initializing $\theta$, a batch of $B$ samples $$\lbrace x_j, y_j\rbrace_{j=1}^B$$ is selected from the training dataset, and the model parameters are updated in the inverse gradient direction of the loss, averaged over the batch set, as $$\theta' = \theta - \tfrac{\eta}{B}\sum_{j=1}^B\nabla_{\theta}\ell(f(x; \theta), y),$$
where $\eta$ is a learning rate. Batches are sampled from $D$ until every point has been used, and this is repeated multiple times, called epochs.

\subsection{Threat Model}


We consider supervised machine learning problems in which the model parameters are updated over time by retraining with new data, which is common in applications for a variety of reasons.  In general, as more labeled data becomes available, models should be updated to correct for potential errors and improve their generalization.  Another key reason for model update is the potential shift in data distribution over time. For example, a sentiment analysis model trained on news article needs to be adapted to take into account  recent events, and models used for financial market forecasts need to be updated as the market evolves continuously. The frequency of model updates is dependent on the application requirements, the data distribution shift, and the envisioned deployment scenarios. For many industrial applications,  model retraining has become part of machine learning deployment pipelines. Our goal is to study the privacy implications of model updates over time, considering factors such as the size of the update, the number of updates, the data distribution shift, and the training regime. 

\begin{wrapfigure}{R}{.5\textwidth}
    \centering
    \includegraphics[width=0.9\linewidth]{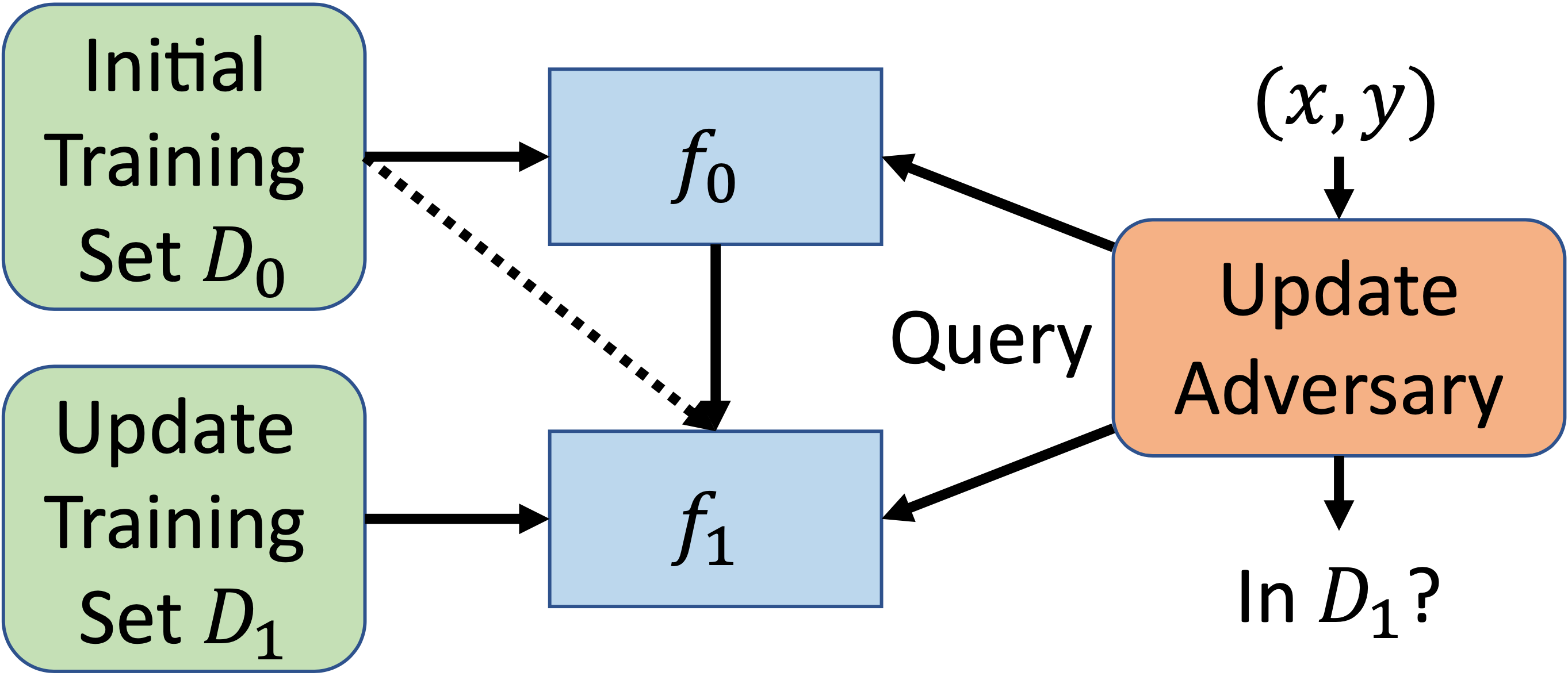}
    \caption{Membership-inference with a single model update.  An adversary with query access to both models $f_0$ and $f_1$ can more effectively distinguish whether a query point $(x, y)$ is in the update set $D_1$ or is an independent sample from the same distribution as $D_1$.}
    \label{fig:diagram}
\end{wrapfigure}

In our setting, a {\em learner} is given $k+1$ datasets $D_0$, $D_1$, $\cdots$ $D_k$ over time, where each dataset consists of $n_i$ samples $$D_i=
\left\{ \left( x_j^{(i)}, y_j^{(i)}\right) \right\}_{j=1}^{n_i}$$ selected from a distribution $\mathcal{D}_i$. The learner runs a training algorithm $\Atrain$ on $D_0$ to produce an initial model $f_0$. The learner is then provided with each new dataset $D_i$, and produces a new model $f_i$ by running an update algorithm $\Aup$ using only datasets $D_0$, $D_1$, $\dots$, $D_i$. This process is described in Algorithm~\ref{alg:model_updates}. In our work, we consider a fixed value for the update size at each iteration $n_i=\nup$ for all $i>0$, and vary this value $\nup$.

\begin{algorithm}
\KwData{Base learning algorithm $\Atrain$, Update algorithm $\Aup$, number of updates $k$, list of datasets $[D_0, D_1, \dots, D_k]$}

$f_0 = \Atrain(D_0)$

\lFor{$i\in 1 \dots k$:}{
  $f_i = \Aup(f_{i-1}; D_i, \cdots, D_1, D_0)$
}\Return $f_0, f_1, \cdots, f_k$
\caption{Generating Model Updates}
\label{alg:model_updates}
\end{algorithm}

In standard membership inference (MI) attacks~\cite{shokri2017membership, yeom2018privacy, salem2020updates}, the {\em adversary} can interact with the machine learning model in a black-box manner, with the goal of distinguishing if a data sample was part of the training set or not. In our model update setting, we 
consider a black-box adversary $\mathcal{A}$, who is capable of observing the output of each model $f_i$ on multiple query points, but does not have knowledge of the specific models architecture or parameters. As models are retrained with new data, the adversary's goal is to infer if a data sample was part of the update set or not. In the setting with multiple model updates, the adversary is also interested in inferring at which time epoch the data sample was used to update the model. Figure~\ref{fig:diagram} visualizes this threat model.



\subsection{Formalization of MI under Model Updates}

We formalize the problem of membership inference with model updates by adapting the membership inference experiment of \cite{yeom2018privacy} to the model update setting. We present the experiment in full generality, and introduce specific contexts for which we subsequently develop specific attack algorithms.

\paragraph{The membership-inference experiment.} 
Let $\Aup$ be a model update algorithm, $D_0$ an initial training set, $f_0$ an initial model, $\nup$ an update set size, and each $\mathcal{D}_i$ with $i\in [k]$ a distribution over samples $(x, y)$.  We define the membership-inference experiment in Algorithm~\ref{fig:mi-experiment}. The experiment allows the adversary access to all updated models, and requires it to distinguish between update and test data.

\begin{algorithm}[h]
	\DontPrintSemicolon	
	\SetKwFunction{ExpMIUpdates}{ExpMIUpdates}
	\SetKwProg{Fn}{Function}{:}{}
	\Fn{\ExpMIUpdates{$\mathcal{A}, \Aup, D_0, f_0, \nup, \mathcal{D}_1, \mathcal{D}_2, \cdots, \mathcal{D}_k$}}{
		\begin{enumerate}[nosep]
			\item For $i\in [k]$, draw $\nup$ samples from $\mathcal{D}_i$: $D_{\mathrm{up}}^i\sim \mathcal{D}_i^{\nup}$.
			\item Generate a sequence of models $f_1, f_2, \cdots, f_k$ by iterating $f_i = \Aup(f_{i-1}; D_i,\dots,D_1,D_0)$ as in Algorithm~\ref{alg:model_updates}.
			\item Sample $a \in [k]$ and $b \in \{0,1\}$ uniformly at random. $b$ represents whether the test sample \\ is in training or not, while $a$ represents which update it belongs to.
			\item If $b = 0$, sample $(x, y)\sim \mathcal{D}$, otherwise $(x, y)\sim D_{\mathrm{up}}^a$.
			\item Let $(\hat{a},\hat{b})=\mathcal{A}(x, y; f_0, f_1,\dots,f_k)$ be the output of the attacker.  Let $s_{a}$ indicate if $\hat{a} = a$ let $s_{b}$ \\ indicate if $\hat{b} = b$, and return $(s_a,s_b)$.
		\end{enumerate}
	}
	\caption{The membership-inference experiment with model updates}
	\label{fig:mi-experiment}
\end{algorithm}


We instantiate this experiment in three settings, for attack algorithm development (Section~\ref{sec:attack_algorithms}) and evaluation (Section~\ref{sec:evaluation}): single update, multiple updates, and single update distribution shift.

\paragraph{Single model update.} In the single update setting, we consider $k=1$, $\mathcal{D}_1=\mathcal{D}$, so the initial training distribution does not change to sample updates. This setting lets us understand the difference between having access to the models before and after an update compared to only the final model.
To evaluate the performance of the attacks, we can measure {\em accuracy}, $\mathbb{E}[s_0]$, and {\em precision}, $\mathbb{E}[s_0|g_0=1]$. An attack maximizing precision may differ from one maximizing accuracy. We also measure {\em recall}, $\mathbb{E}[g_0=1|b=1]$, but, as noted in \cite{leino2020stolen} and \cite{jayaraman2021revisiting}, a membership inference attack achieving high precision is likely to be more harmful than one achieving high recall.\footnote{Carlini et al. \cite{carlini2021membership} suggests measuring true positive rate (TPR) and false positive rate (FPR). TPR is identical to recall, and FPR can be computed easily from precision, given that $b$ takes the value 0 and 1 equally often.} An attack which classifies every sample as appearing in the training set is not harmful, but obtains a high recall; meanwhile, an attack correctly identifying a single sample as appearing in the training set achieves tiny recall, but is harmful to that sample.

\paragraph{Multiple model updates.}
The multiple update setting considers $k>1$ and $\mathcal{D}_i=\mathcal{D}$ for all $i$, so the training distribution remains constant. This lets us understand the difference between access to multiple models and only the last model.
There is a richer set of metrics that can be used here to evaluate attack performance. We measure the \emph{generic accuracy} as the success of the attack at inferring if a data sample is part of any of the $k$ update datasets, $\mathbb{E}[s_0]$. We also measure the accuracy with which the precise update epoch is identified, $\mathbb{E}[s_1]$. We call this \emph{specific accuracy}. Note that any attack for generic accuracy can be converted to a specific accuracy attack by randomly selecting one of the $k$ update datasets.

\paragraph{Distribution shift.}
In this setting, we consider $k=1$ and $\mathcal{D}_1\neq \mathcal{D}$. While distribution shift may happen over several updates, we elect to isolate the impact that distribution shift has on our attacks.  To measure attack performance, we use the same metrics as in the single update setting: accuracy, precision, and recall.

\paragraph{Retraining methods.} There are multiple ways to use the datasets $D_0, D_1, \cdots D_i$ to update model $f_{i-1}$ at epoch $i$, which we call:
\begin{itemize}
    \item \textbf{\sgdnew.} This strategy updates the model $f_{i-1}$ using only the new training set $D=D_i$.  To prevent forgetting earlier datasets, one must use a small learning rate and few epochs.
    \item \textbf{\sgdfull.} This strategy updates the model $f_{i-1}$ using the entire training set available at epoch $i$, $D=[D_0, D_1, \cdots D_i]$.  With a larger dataset, one can increase the learning rate and number of epochs, at the cost of using less recent data.
\end{itemize}

These training strategies have also been used by previous work considering model updates. Zanella-B\'{e}guelin et al.~\cite{zanella2020analyzing} compare both strategies for extracting training data from generative language models, while Salem et al.~\cite{salem2020updates} use a variant of continual training for mounting reconstruction attacks.

\section{Attack Algorithms}
\label{sec:attack_algorithms}
We develop attacks for the single and multiple update instantiations of the MI under model update problem introduced in the preceding section.  Our attacks are generic with respect to distribution shift and retraining methods, so we discuss those topics directly as part of our evaluation of the proposed attacks (Section~\ref{sec:evaluation}).
We first focus on attacks for the single update setting (Section~\ref{sec:one_update}).
We propose multiple options and subsequently justify analytically both their designs and the need for the options (Section~\ref{sec:algos-justification}).
Finally, we propose several attack options for the multiple update setting (Section~\ref{sec:mult_update}).
Table~\ref{tab:attacks}, placed at the end of this section, summarizes the various attacks and their options for easy access.

\subsection{Single Update Attacks}
\label{sec:one_update}

Given a single model $f_0$ trained on a dataset $D_0$, and an individual target example $(x,y)$, the standard black-box way to test membership of $x$ in $D_0$ is to compute an appropriate {\em score function} $\ell(x,y; f_0)$ and apply some {\em threshold} to this score.  

Now suppose we are given two models $f_0,f_1$ trained on datasets $D_0$ and $D_0 \cup D_1$ respectively, and a target example $(x,y)$, whose membership in $D_1$ we want to infer.  Intuitively, being a member of $D_1$ is equivalent to being a member of $D_0 \cup D_1$ and a non-member of $D_1$.  So, a first attempt is simply to infer membership in $D_0 \cup D_1$ and membership in $D_0$ and decide membership in $D_1$ appropriately.  However, as we show in Section~\ref{sec:algos-justification}, if the score information is binary (e.g.\ it is the output of some membership-inference attack for each standalone model), then model updates \emph{do not} increase the accuracy of membership-inference attacks.

However, score-based membership-inference attacks give more information than just the binary outcome, and a key contribution of our work is to show how to strictly outperform the preceding baseline by {\em combining} the two scores, $\ell(x,y;f_0)$ and $\ell(x,y;f_1)$.  Given these two scores, there are multiple logical ways we could combine them to produce a single score for membership in $D_1$.  In this work, we introduce two main strategies: $\textsc{ScoreDiff}$ and $\textsc{ScoreRatio}$.  As their names suggest, we define
\begin{align*}
&\textsc{ScoreDiff}(x, y, f_0, f_1, \ell) = \ell(x,y,f_1) - \ell(x,y;f_0) \textrm{ and} \\
&\textsc{ScoreRatio}(x, y, f_0, f_1, \ell) = \frac{\ell(x,y,f_1) + c}{\ell(x,y;f_0) + c}
\end{align*}
where $c > 0$ is a damping constant to avoid instability when the denominator is close to $0$. As we will show empirically in Section~\ref{sec:evaluation}, neither of these two methods for combining scores dominates the other, and in Section~\ref{sec:algos-justification} we give analytical justification for why each score can sometimes be superior. 

We view the use of these particular combiners as a second key contribution of our work. While we could take the approach of~\cite{chen2020machine} and learn how best to combine scores from scratch, our evaluation will show that choosing a fixed combiner, such as \diff or \ratio, is both more efficient and more effective.

To use these methods for combining scores, we need to do two things: (1) instantiate these strategies by choosing the score function, and (2) determine a threshold to apply to convert the real-valued scores into a binary membership decision.  As score functions, in this paper we use the standard cross-entropy loss~\cite{yeom2018privacy} and the state-of-the-art LiRA score function~\cite{carlini2021membership}. These are both computable with only class probabilities, but LiRA requires training shadow models. Future single-model MI attacks might provide even better score functions that our strategies can incorporate.

In this work we consider multiple ways to set the threshold.  To motivate these methods, we return to the standard interpretation of membership inference as a hypothesis-testing problem.  Once the score function is fixed, any query point gets mapped to some value $v$. Typically, $v$ is compared to a threshold $T$: $v$ membership as IN if $v\ge T$, and OUT otherwise. This performs well, because the distributions $P_{\textrm{IN}}$ of IN scores and $P_{\textrm{OUT}}$ of OUT scores will differ. In practice, however, the attacker does not know the IN and OUT distributions, and so needs some side information to find a good threshold. All membership-inference attacks give the attacker some side information for this goal. In this work we consider a few different types of side information the attacker can have, corresponding to different ways of setting the threshold:

\begin{itemize}
    \item \emph{Batch Strategy.} The adversary has access to a dataset containing both update points (IN) and test points (OUT), but does not know which are IN.  Given these points, the adversary can compute scores and thereby see samples from the distributions $P_{\textrm{IN}}$ and $P_{\textrm{OUT}}$ and can then find an optimal threshold for distinguishing these points.  For example, if the attacker has $k$ update points and $k$ test points for large $k$, then the quantiles of these $2k$ scores will give a good threshold.  In our work we choose the median to maximize accuracy and the $10^{\textit{th}}$ percentile to maximize precision.  The assumption that the attacker has access to many update points is strong, but is a useful thought experiment, since the attacker can try to approximate these samples on their own using other forms of side information, which is what our next approach does.
    
    \item \emph{Transfer Strategy (``Shadow Models'').} The adversary has access to some set of test points.  Using these test points, the attacker trains \emph{shadow models}~\cite{shokri2017membership, salem2020updates, leino2020stolen} $\hat{f}_0$ and then updates these models using a random half of the test points.  Provided that the test points are drawn from the same (or similar) distribution and the attacker can train the models in the same (or similar) method as the real models, this should allow us to approximate $P_{\textrm{IN}}$ and $P_{\textrm{OUT}}$. Since the attacker knows which points were included in the update set, the attacker now has a batch of update points and test points, which means the attacker can use the batch strategy to obtain a good threshold for these points and then transfer that threshold to the models they are trying to attack. This only uses one shadow model, as we identify a threshold on a scalar value, rather than train a classifier, as many shadow model attacks do~\cite{salem2020updates, chen2020machine}.
    
    \item \emph{Rank Strategy.} The adversary again has access to some set of test points.  The attacker can use these to generate samples from $P_{\textrm{OUT}}$, and given these samples, can use the $q$-quantile of the distribution as a threshold.  This strategy will achieve a false-negative rate of $q$.  This strategy was employed by~\cite{leino2020stolen}.

\end{itemize}

\subsection{Analytical Justification}
\label{sec:algos-justification}

The preceding single-update attack algorithms rely on two key insights that our work contributes: (1) that taking advantage of model updates requires combining rich score information from individual-model MI attacks against the model and its update; and (2) that the two specific methods, \diff and \ratio, that we propose for combining two scores into a single MI attack in the model-update setting are both needed and justified.
This section provides analytical justification for these two insights.  Section~\ref{sec:evaluation} provides empirical evidence in support of these claims.

\paragraph{The need for rich score information.}
We justify our score combination approach by showing that model updates can only increase accuracy of membership-inference when the attacker can obtain rich (non-binary) score information.
We prove that, under reasonable assumptions, an adversary with only access to the binary scores does not improve when given access to the initial model.



This threat model where the attacker has 0/1 loss information is used for the baseline ``gap attack'' from prior work, which exploits trained models' generalization gaps~\cite{choo2020label}. In this attack, there are only two sets of points: correctly classified and incorrectly classified points. When the adversary has access to two models, there are four sets of points: $\lbrace f_0 \text{ Correct}, f_0 \text{ Incorrect} \rbrace\times \lbrace f_1 \text{ Correct}, f_1 \text{ Incorrect} \rbrace$. The adversary can make a decision for each of these sets. We define the frequency of each case with the following table:

\begin{table}[h!]
\small
\centering
\begin{tabular}{c|cc|cc}
~ & \multicolumn{2}{c|}{Update} & \multicolumn{2}{c}{Test} \\
~ & $f_1$ Correct & $f_1$ Incorrect & $f_1$ Correct & $f_1$ Incorrect \\

\hline
$f_0$ Correct & $p^{u}_{11}$ & $p^{u}_{10}$ & $p^{t}_{11}$ & $p^{t}_{10}$\\
$f_0$ Incorrect & $p^{u}_{01}$ & $p^{u}_{00}$ & $p^{t}_{01}$ & $p^{t}_{00}$
\end{tabular}

\label{tab:01theory}
\end{table}

Theorem~\ref{thm:01theory} proves that, under realistic assumptions, knowledge of $f_0$ does not improve the adversary's attack.  The assumptions are: (1) The initial model $f_0$ performs equally on update and test points: $p_{11}^u+p_{10}^u=p_{11}^t+p_{10}^t$.  This is realistic, as both sets of points are not in $f_0$'s training set.  (2) The updated model $f_1$ performs better on update points than on test, for both points correctly and incorrectly classified by $f_0$: $p_{11}^u>p_{11}^t,~ p_{01}^u>p_{01}^t$.  This is realistic, as models perform better on their training data than testing data.


\begin{theorem}
When the loss function is $0/1$-loss, and the assumptions above hold, then there is an attack that has access to only the loss on $f_1$ and has at least as high accuracy as any attack with access to the losses on both $f_0$ and $f_1$.
\label{thm:01theory}
\end{theorem}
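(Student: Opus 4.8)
The plan is to reduce both adversaries to a choice of which cells of the $2\times 2$ correctness table to label IN, to compute each one's accuracy as an explicit linear functional of the cell frequencies, and then to show that the two assumptions force the accuracy-maximizing choice to depend only on $f_1$'s correctness.

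First I would formalize the full-information adversary. Since it observes only which of the four cells $c \in \{11,10,01,00\}$ the query lands in (indexing by the pair $(f_0,f_1)$ of correctness bits), its most general deterministic strategy is a subset $S \subseteq \{11,10,01,00\}$ of cells that it declares IN. Writing $b \in \{0,1\}$ for membership and using that $b$ is uniform, its accuracy is
$$\tfrac12\Pr[\hat b = 1 \mid b=1] + \tfrac12\Pr[\hat b = 0 \mid b=0] = \tfrac12\Big(1 + \sum_{c\in S}\big(p_c^u - p_c^t\big)\Big),$$
because conditioned on membership the cell frequencies are the $p_c^u$ and conditioned on non-membership they are the $p_c^t$. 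Hence the optimal full-information attack declares IN exactly those cells with $p_c^u - p_c^t > 0$, achieving accuracy $\tfrac12\big(1 + \sum_c (p_c^u - p_c^t)_+\big)$.

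Next I would pin down the signs of the four differences $d_c := p_c^u - p_c^t$ using the stated assumptions. Assumption (2) gives $d_{11} > 0$ and $d_{01} > 0$ directly. Assumption (1), that $f_0$ is equally accurate on both populations, reads $p_{11}^u + p_{10}^u = p_{11}^t + p_{10}^t$, i.e. $d_{11} + d_{10} = 0$, so $d_{10} = -d_{11} < 0$. Finally, each column of frequencies sums to $1$, so $\sum_c d_c = 0$; combined with $d_{11}+d_{10}=0$ this yields $d_{01}+d_{00}=0$ and hence $d_{00} = -d_{01} < 0$. The cells with positive difference are therefore exactly $\{11,01\}$, which are precisely the cells on which $f_1$ is correct.

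The conclusion is then immediate: the optimal full-information rule ``declare IN iff $f_1$ is correct'' factors through $f_1$'s correctness alone, so an adversary who sees only the loss on $f_1$ can implement it verbatim and attain the same accuracy $\tfrac12(1 + d_{11} + d_{01})$, which by the previous paragraph equals the full-information optimum $\tfrac12\big(1+\sum_c (d_c)_+\big)$. I expect the only real subtlety---the crux of the argument---to be recognizing that assumption (1) together with normalization forces the two negative differences to sit exactly in the $f_1$-incorrect cells, thereby collapsing the optimal full-information decision boundary onto a function of $f_1$ alone; everything else is a routine maximization of a linear functional over subsets.
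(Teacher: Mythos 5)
Your proof is correct and follows essentially the same route as the paper's: both enumerate the four cells of the $(f_0,f_1)$-correctness table, observe that the optimal attack makes a per-cell IN/OUT decision based on the sign of $p_c^u-p_c^t$, and use the two assumptions (plus normalization) to show the positive-sign cells are exactly the $f_1$-correct ones. Your phrasing via the linear functional $\tfrac12\bigl(1+\sum_{c\in S}(p_c^u-p_c^t)\bigr)$ is a slightly more compact packaging of the paper's cell-by-cell $\max(p_c^u,p_c^t)$ computation, but the decomposition and key steps are identical.
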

\begin{proof}[Proof Sketch.]
The proof describes the optimal decisions for the adversary with and without updates. The optimal strategy without updates identifies correctly classified points as members, and incorrectly classified points as nonmembers. We show that, because the model improves performance on update points, the optimal strategy with updates is the same as without updates. The full proof is in the Appendix. 
\end{proof}

Thus, for an attack to successfully use model updates, it must exploit information beyond simply their generalization gaps.  This justifies our score combination approach in Section~\ref{sec:one_update}.

\paragraph{Justifying \diff and \ratio.}
We justify our choice of score combination functions by studying the example of computing the mean and the median of the training data.  These two examples will show both that our choices are well motivated, and also demonstrate that the right approach depends on the specific learning problem being solved, and thus there is likely not a single best method.  Section~\ref{sec:eval:q4} confirms these claims empirically.

The earliest work in membership-inference~\cite{Homer+08, SankararamanOJH09}, in the single-model/no-update setting, justified specific membership-inference attacks by exploiting a connection to hypothesis testing and use the Neyman-Pearson Lemma to devise optimal attacks.  Proving exact optimality typically requires making strong distributional assumptions, and being able to reason explicitly about the exact distribution of the outputs of the learning algorithm.  In our work, we mostly consider learning algorithms that are too complex for this sort of precise analysis (such as neural networks), so we settle for a more heuristic justification instead.  In particular, we will consider a learning algorithm that outputs the exact minimizer of the loss function to obtain the initial model $f_0$, then performs an update on a single point $x_{i}^{1}$ by performing a single gradient step from $f_0$ with fixed step size to obtain the updated model $f_1$.  This update strategy corresponds to what we call SGD-New, but with a single step of training.  We then analyze how the loss on the point $x_{i}^{1}$ changes as a result of the update and will see that the change is best reflected by either \diff or \ratio,  {\em depending on the loss function}.

We consider two loss functions: (1) the $\ell_2^2$ loss, which is denoted $\ell^m(x, f)=\|f-x\|_2^2$, and whose minimizer is the dataset's mean, and (2) the $\ell_2$ loss $\ell^g(x, f)=\|f-x\|_2$, whose minimizer is the dataset's geometric median.  For simplicity we also consider a single update point $x$.

\begin{algorithm}[t]
\DontPrintSemicolon
\KwData{Test Sample $x, y$, Models $f_0, f_1, \cdots f_k$, Privacy Score Function $\ell$, Threshold $T$ or Thresholds $\lbrace T_i\rbrace_{i=1}^k$, Strategy $\textsc{Attack}$ (\diff or \ratio)}

\SetKwFunction{BackFront}{BackFront}
\SetKwProg{Fn}{Function}{:}{}
\Fn{\BackFront{$x, y, f_0, f_1, \cdots, f_k, \ell, T, \textsc{Attack}$}}{
    \Return $\mathbbm{1}\left(\textsc{Attack}(x, y, f_0, f_k, \ell)<T\right)$
}

\SetKwFunction{Delta}{Delta}
\SetKwProg{Fn}{Function}{:}{}
\Fn{\Delta{$x, y, f_0, f_1, \cdots, f_k, \ell, \lbrace T_i\rbrace_{i=1}^k, \textsc{Attack}$}}{
  
    \For{$i\in 1 \dots k$}{
        $g_i =~\mathbbm{1}\left(\textsc{Attack}(x, y, f_{i-1}, f_i, \ell)<T_i\right)$ \\
        \lIf{$g_i =~$\emph{IN}}{
            \Return (IN, $i$)
        }
    }

\Return OUT
    
}

\caption{Multiple Update Attacks}
\label{alg:multi_update}
\end{algorithm}

For the $\ell_2^2$ loss, the update rule is
\[
f_1^m=f_0^m - \eta \nabla \ell^m(x, f_0^m)=f_0^m- 2\eta (f_0^m - x).
\]
Now we have,
\begin{align*}
    \ell^m(x, f_1^m)= & \left<(f_1^m-f_0^m) + (f_0^m-x), (f_1^m-f_0^m) + (f_0^m-x)\right> \\
    ={} & \|f_0^m-x|_2^2 + \|f_1^m-f_0^m\|_2^2 + 2\left<f_1^m-f_0^m, f_0^m-x\right>\\
    ={} & \ell^m(x, f_0^m) + (4\eta^2-4\eta)\|f_0^m-x\|_2^2 \\
    ={} & (1-2\eta)^2\ell^m(x,f_0^m).
\end{align*}
When recomputing the mean, the loss after update is a fixed ratio decrease from the loss before the update. 
Meanwhile, the probability that a randomly drawn test point will have the same loss ratio is 0. Then, with a single update point and a known learning rate, loss ratio is a \emph{perfect} membership test.


For the $\ell_2$ loss, the update rule is
\[
f_1^g
=f_0^g -\eta \nabla \ell^g(x, f_0^g)
=f_0^g- \eta \frac{f_0^g-x}{\|f_0^g-x\|_2}.
\]
Now we have,
\begin{align*}
    \ell^g(x, f_1^g)
    ={} & \sqrt{\| (f_1^g - f_0^g) + (f_0^g - x) \|_2^2} \\
    ={} & \sqrt{\|f_0^g-x\|_2^2 + \|f_1^g-f_0^g\|_2^2 + 2\left<f_1^g-f_0^g, f_0^g-x\right>}\\
    ={} & \sqrt{\ell^g(x, f_0^g)^2 + \frac{\eta^2\|f_0^g-x\|^2_2}{\|f_0^g-x\|^2_2}-2\eta\frac{\|f_0^g-x\|_2^2}{\|f_0^g-x\|_2}}\\
    ={} & \sqrt{\ell^g(x, f_0^g)^2+\eta^2-2\eta\ell^g(x, f_0^g)}\\
    ={} & \ell^g(x, f_0^g)-\eta,
\end{align*}
which shows that recomputing the geometric median results in a fixed constant decrease from the loss before the update, making it also a perfect membership test in this setting.

In the Appendix, we study mean estimation under updates, showing that model updates provably improve accuracy, and lower bounding the accuracy of \diff.

\subsection{Multiple Update Attacks}
\label{sec:mult_update}

Multiple updates can allow leakage in two ways: an adversary can learn both \emph{whether} a user is contained in a training set, but also \emph{when} they begin participating in that dataset. The former is the standard membership inference task, but when a user participated in a dataset may also be sensitive in cases where membership is sensitive, such as in medical datasets, where someone could learn when a patient contracted a disease.  We construct an attack for each goal: the Back-Front attack and the Delta attack, in Algorithm~\ref{alg:multi_update}.

\paragraph{Back-Front attack.} This attack ignores all information except for the first and last model update, and is designed for the generic accuracy case. This attack is the natural adaptation of the single model attacks to the generic accuracy case, as it treats the sequence of $k$ updates as a single, large, update algorithm. The attack uses either the score difference or ratio between the original model $f_0$ and the final model $f_k$ after $k$ updates.

\paragraph{Delta attack.} This attack is designed for the specific accuracy setting. In this attack, we identify samples to a specific update when they have a large loss difference (or ratio) on the two consecutive models $f_{i-1},f_{i}$ produced by the update.
Notice that we can also adapt this attack (or any specific accuracy attack) to the generic attack setting: a sample which is predicted to be in any update is predicted to be IN. As a result, we measure this attack's performance on both specific and generic accuracy. 

\paragraph{Threshold setting.} The three thresholding strategies we defined for the single-update setting,  Batch, Transfer, and Rank, can be adapted to configure the threshold for the multi-update attacks.
However, for simplicity, we focus on the Batch strategy.
With Batch, the adversary is given a dataset $D$ which contains each update set $\lbrace D_i\rbrace_{i=1}^k$ of size $n_{up}$, as well as a test set of size $k n_{up}$. For the Back-Front attack, we set the threshold  to the median value of the entire dataset $D$. For the Delta attack, we set each threshold $T_i$ so that $n_{up}$ points are classified into each update index.

\begin{table}[t]
\centering
{
\begin{tabular}{l|p{0.20\linewidth}|p{0.35\linewidth}}
Method & Function & Description \\
\hline
\diff & single-update attack & compute difference of scores \\ \hline
\ratio & single-update attack & compute ratio of scores \\ \hline
Back-Front & multi-update attack & ignore intermediate models \\ \hline
Delta & multi-update attack & analyze adjacent models  \\ \hline
Transfer & setting threshold & transfer from a shadow model \\ \hline
Batch & setting threshold & calibrate with update/test points \\ \hline
Rank & setting threshold & calibrate with test points \\ \hline
\end{tabular}}
\caption{Proposed attacks and thresholding strategies.}
\label{tab:attacks}
\end{table}

\section{Evaluation}
\label{sec:evaluation}

We next evaluate our proposed algorithms for the single- and multi-update settings by answering seven key questions in the context of the datasets described in Section~\ref{sec:eval:datasets}:
\begin{itemize}
    \item[{\bf Q1:}] Does access to one model update give the attack an advantage over models with no access to updates? How does the update set size impact this advantage? (Section~\ref{sec:eval:q1})
    \item[{\bf Q2:}] Does attack advantage improve with the number of updates?  (Section~\ref{sec:eval:q2})
    \item[{\bf Q3:}] How does the training strategy---\sgdnew or \sgdfull---impact attack performance? (Section~\ref{sec:eval:q3})
    \item[{\bf Q4:}] How do the various attacks and thresholding choices impact attack performance? (Section~\ref{sec:eval:q4})
    \item[{\bf Q5:}] How does distribution shift impact attack performance? (Section~\ref{sec:eval:q5})
    \item[{\bf Q6:}] How would adoption of differential privacy impact attack performance? (Section~\ref{sec:eval:q6})
    \item[{\bf Q7:}] How do our attacks compare to those developed for unlearning in Chen et al.~\cite{chen2020machine}? (Section~\ref{sec:eval:q7})
\end{itemize}

Since we propose multiple attacks for two different settings -- single-update and multiple updates -- and each attack can be instantiated with multiple score functions and thresholding methods, the space of experimentation is quite large.  Moreover, performance of an attack can be measured in different ways, such as with accuracy, precision, and recall for the single-update setting, and specific accuracy and generic accuracy for the multiple-update setting.
To tame this large experimentation space, we answer different questions in the context of different algorithms, methods, settings, and metrics that are most relevant for the specific question.
Q1-Q3 choose the best-performing attack relevant to the considered setting -- single or multiple updates -- and focus on accuracy and specific accuracy, respectively.
Q4 compares key pairs of algorithms and mechanisms under multiple performance metrics.
And Q5-Q7 focus on the single-update setting and best-performing algorithms.

\subsection{Datasets}
\label{sec:eval:datasets}

In this section we describe the datasets used in our evaluation.

\paragraph{FMNIST.} FMNIST is a 10-class dataset consisting of 28x28 pixel grayscale images of different clothing items. On this dataset, the initial model is a logistic regression model, trained on an initial dataset of 1000 data points for 50 epochs at a learning rate of 0.01. On average, this achieves 82.5\% training accuracy and 79.5\% test accuracy. \sgdnew trains for 10 epochs at a learning rate of 0.001, and \sgdfull trains for 10 epochs at a learning rate of 0.01.

\paragraph{CIFAR-10.} CIFAR-10 is a 10-class dataset of 32x32 pixel RGB images of various animals and vehicles. This dataset is harder than FMNIST, and requires more complex models to achieve reasonable accuracy. We fine-tune a VGG-16 network which was pretrained on the ImageNet dataset. The initial model is trained with 12 epochs over a training set of 25000 points at a learning rate of $10^{-4}$. \sgdnew trains for 4 epochs at a learning rate of $10^{-5}$, and \sgdfull trains for 2 epochs at a learning rate of $10^{-4}$.

\paragraph{Purchase100.} Purchase100 is a 100-class purchase history dataset. The task is to classify a shopper into one of 100 clusters. Here, our initial model is a single layer neural network trained on an initial dataset of 25000 samples for 95 epochs at a learning rate of 0.01. \sgdnew trains for 5 epochs at a learning rate of 0.01, and \sgdfull trains for 10 epochs at a learning rate of 0.1.

\paragraph{IMDb.} IMDb is a text dataset of movie reviews, where the task is to classify a movie review as either positive or negative. Here, we fine tune the BERT base model (uncased) which was pretrained on a large collection of English data. The initial model is trained with 4 epochs over a training set of 25000 points at a learning rate of $10^{-5}$. \sgdnew trains for 6 epochs at a learning rate of $10^{-6}$, and \sgdfull trains for 3 epochs at a learning rate of $10^{-5}$.

\subsection{Advantage from a Single Update}
\label{sec:eval:q1}

\begin{figure*}[t]
    \centering
\begin{subfigure}{0.4\textwidth}
  \includegraphics[width=\linewidth]{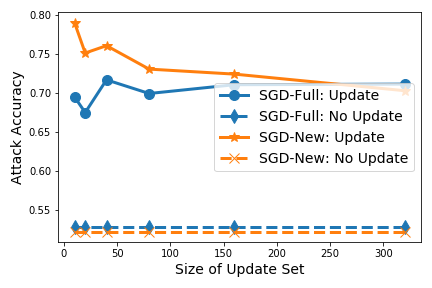}
  \caption{FMNIST}
  \label{fig:pou:fmnist_lbfgs}
\end{subfigure}
\begin{subfigure}{0.4\textwidth}
  \includegraphics[width=\linewidth]{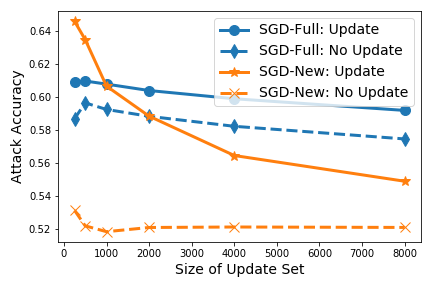}
  \caption{CIFAR-10}
  \label{fig:pou:cifar_sgd}
\end{subfigure}
\begin{subfigure}{0.4\textwidth}
  \includegraphics[width=\linewidth]{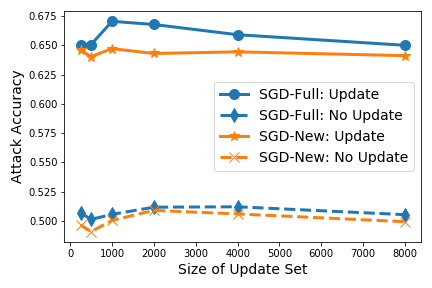}
  \caption{Purchase100}
  \label{fig:pou:purchase_sgd}
\end{subfigure}
\begin{subfigure}{0.4\textwidth}
  \includegraphics[width=\linewidth]{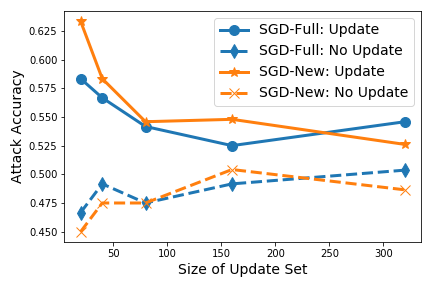}
  \caption{IMDb}
  \label{fig:pou:imdb_sgd}
\end{subfigure}
\caption{(Q1, Q3) Accuracy of attacks with access to one update (solid lines) vs. without access to updates (dashed lines).  Each line corresponds to the best attack for each setting (see text).
Even one update gives the adversary significant power to identify training set members.}
\label{fig:single_update}
\end{figure*}

{\bf Q1: Does access to one model update give the attack an advantage over models with no access to updates? How does the update set size impact this advantage?}

To evaluate MI advantage from a single update, we compare with three baseline attacks that use only the updated model $f_1$. The first baseline, called {\em Loss}, uses the approach of Yeom et al.~\cite{yeom2018privacy}, which compares the loss on a point to the average training loss. The second baseline, called {\em Gap}, uses the gap attack of \cite{choo2020label}, which classifies correctly classified points as training and incorrectly classified points as test. For all datasets except IMDb, we also evaluate use the LiRA attack~\cite{carlini2021membership} as a baseline, which trains shadow models to compute sample-specific baseline loss values to compare to.

Figure~\ref{fig:single_update} shows the accuracy of the best of our single-update attacks, compared to the best baseline without access to updates.  We show these accuracies for update sizes varying from 1\% to 32\% of the original training set (except IMDb, where we use a fixed 10-320 points for acceptable running time).  The best attack differs in each setting.  For updates, fixing the Batch thresholding strategy, we choose the best of $\{\text{\diff}, \text{\ratio}\} \times \{\text{loss score},\text{LiRA score}\}$.  For no updates, we choose the best of $\{\text{Gap}, \text{Loss}, \text{LiRA}\}$ baselines.
For both update and no-update, we show accuracy for both training strategies \sgdfull and \sgdnew.

For all datasets, update sizes, and training strategies, attacks with model updates outperform the no-update attacks.
As expected, the gap between updates and no updates decreases as the update set gets larger. On FMNIST, for example, at 10 update points, the Batch attack achieves 79\% accuracy, while the Batch attack achieves 70\% accuracy at 320 update points. The gap between the update and no update attacks decreases from 27\% to 18\%.

\paragraph{Q1 answer:} Overall, our results show that updates give the adversary significant advantage to identify training set members.

\subsection{Advantage from Multiple Updates}
\label{sec:eval:q2}

\begin{figure*}[t]
    \centering
\begin{subfigure}{0.4\textwidth}
  \includegraphics[width=\linewidth]{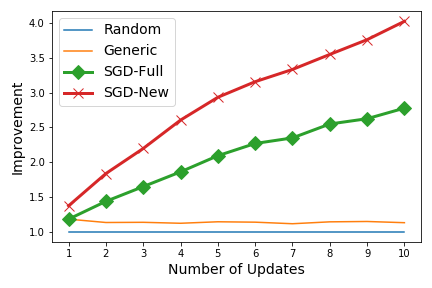}
  \caption{FMNIST}
  \label{fig:mult:spec_fmnist}
\end{subfigure}
\begin{subfigure}{0.4\textwidth}
  \includegraphics[width=\linewidth]{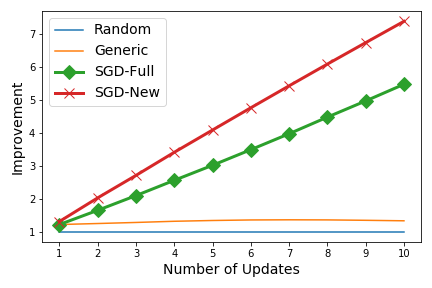}
  \caption{CIFAR-10}
  \label{fig:mult:spec_cifar}
\end{subfigure}
\begin{subfigure}{0.4\textwidth}
  \includegraphics[width=\linewidth]{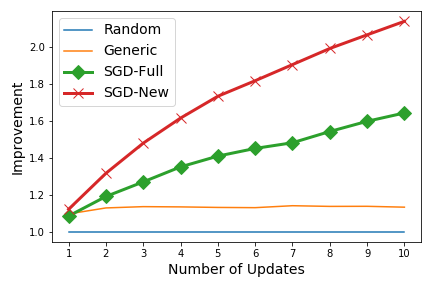}
  \caption{Purchase100}
  \label{fig:mult:spec_purchase}
\end{subfigure}
\begin{subfigure}{0.4\textwidth}
  \includegraphics[width=\linewidth]{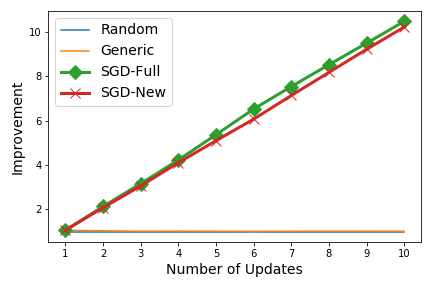}
  \caption{IMDb}
  \label{fig:mult:spec_imdb}
\end{subfigure}
\caption{(Q2, Q3) Improvement of specific accuracy over random guessing (y axis), as a function of the number of updates (x axis). For $k$ updates, Random guessing achieves $1/2k$ accuracy. We present the best attack strategy for each dataset: for FMNIST and Purchase100, this is loss difference, but loss ratio for CIFAR-10. As the number of updates increases, the attack performs significantly better than random guessing, although the absolute accuracy decays.}
\label{fig:mult_specific}
\end{figure*}

{\bf Q2: Does advantage improve with the number of updates?}

To evaluate the threat of MI with the number of updates, we run our multi-update attacks on a sequence of 1 to 10 updates, and observe how each attack's performance changes. To isolate the role of multiple updates, we fix $\nup$ to 250 for CIFAR-10, 100 for Purchase100, and 10 for FMNIST and IMDb (1\%, .4\%, 1\%, and .04\% of the initial dataset, respectively). We measure both specific and generic accuracy, which are the metrics relevant for the multi-update attack, however we focus in this section on specific accuracy and evaluate generic accuracy in a subsequent section. However, we remind the reader that a generic accuracy attack can be used to construct a specific accuracy attack, which we will use as a baseline.

We compare against two baselines. The first, called {\em Random}, is random guessing. It randomly selects IN or OUT with probability $1/2$ and selects the update index uniformly at random. This results in a success probability of $1/2k$ when there are $k$ updates. We also define a stronger baseline, called {\em Generic}, which runs a generic accuracy attack to answer IN or OUT, and selects the update uniformly at random. We ignore the details of generic accuracy attacks for this section. If the generic accuracy of the attack is $p$, then it obtains specific accuracy $p/k$. When generic accuracy performs better than random chance ($p>1/2$), Generic outperforms Random.

Figure~\ref{fig:mult_specific} shows the multiplicative improvement of specific accuracy over random guessing as a function of the number of updates. As before, we show the best attack for each case, fixing the thresholding strategy to Batch.  Our best attacks significantly outperform the baselines for specific accuracy, with the gap increasing with the number of updates. For example, for \sgdnew on CIFAR-10, at 2 updates, our best attack (in this case, \ratio\ with cross-entropy loss as the score) outperforms the baseline by a factor of $2.03\times$, while at 8 updates, it outperforms the baseline by $6.10\times$. Still, for both the baselines and our attacks, the absolute value of specific accuracy decays with the number of updates.  This is natural, as increasing the number of updates makes it more difficult to guess which of the many updates a point was used in---this is why the Random and Generic baselines' success probabilities -- $1/2k$ and $p/k$, respectively -- decay with $k$.  For our attack, the absolute value of specific accuracy for CIFAR-10 50.8\% at 2 updates and 38.1\% at 8 updates.  Yet, our attacks fare much better compared to our baselines, whose specific accuracy for CIFAR-10 is 25\% at 2 updates and 6.25\% at 8 updates.

\paragraph{Q2 answer:} Thus, as the number of updates increases, the attack performs significantly better than appropriate baselines, although the absolute accuracy fundamentally decreases for all cases.

\subsection{Impact of Training Strategy}
\label{sec:eval:q3}

{\bf Q3: How does the training strategy---\sgdnew or \sgdfull---impact attack performance?}

The success of the MI attack depends on the training strategy, which is a choice the learner makes that the adversary cannot influence.
If one training strategy were consistently less vulnerable to attack than the other, then the learner could choose the former, as a heuristic defense.
We find that such a heuristic exists, but it depends on the update size and dataset, making it difficult for the learner to configure without experimentation with our attacks.

We revisit Figures~\ref{fig:single_update} and~\ref{fig:mult_specific}, which already illustrate the effect of \sgdnew and \sgdfull on attack performance with one and multiple updates, respectively.
For a single update, Figure~\ref{fig:single_update} shows the attack is more effective with \sgdnew when the update set is small.  However, the attack's performance on \sgdnew degrades faster than on \sgdfull as the update set size increases.  For example, on CIFAR, the gap between 250 and 8000 update points is 10\% for \sgdnew but only 2\% for \sgdfull.
This faster drop in performance with \sgdnew on larger update sets makes \sgdfull, in fact, more vulnerable to attack for large update sets on CIFAR-10 compared to \sgdnew.  The reason for this inversion is that \sgdfull would naturally use a higher learning rate for the update set compared to \sgdnew. Interestingly, the inversion already happens on Purchase100 at the smallest update set size.

For multiple updates, we observe a similar and even more consistent effect.
Figure~\ref{fig:mult_specific}, which uses a very small update set of 1\% of the original training set, shows the attack as most effective with \sgdnew.  This is true even more consistently across datasets than with a single update.
In the multiple update setting, \sgdnew only uses each update point once, so an update point will observe a larger loss decrease in the update in which it appears compared to the loss in updates in which it does not appear.  \sgdfull, by contrast, should observe a loss decrease in each update.
We include in Appendix~\ref{app:more_exp} a figure (Figure~\ref{fig:specific100}) showing that for a larger update set size of 10\% of the original training set, attacks on \sgdfull are more effective relative to attacks on \sgdnew.

\paragraph{Q3 answer:} Thus, as a rule of thumb, when update sets are small, training on the whole dataset is less vulnerable to MI attack; but when update sets are larger, training on only the update set is less vulnerable.  The point at which the inversion happens depends on the dataset, so a learner should tune this heuristic, for example, by running our attack.  Section~\ref{sec:eval:q5} evaluates differential privacy as a far more principled approach for defense against MI attacks, yet still one that  will likely require some tuning or auditing in practice, for which our attacks could still prove useful.

\subsection{Impact of Attack Strategy}
\label{sec:eval:q4}

{\bf Q4: How do the various attacks and thresholding choices impact attack performance?}

In the preceding questions, we picked the best performing attack algorithm and thresholding strategy from the suite we are proposing.
But are all these attacks/thresholds really needed, or do some of them outperform others consistently?
The answer is, indeed, that different algorithms and thresholds perform better in different settings, so they are all relevant in an attacker's toolkit. For example, the LiRA score function is more difficult to compute than loss, so should be used when it can be reliably estimated. Recall that the adversary's knowledge determines their choice of threshold.
We show here a few main comparisons.

\begin{table*}
    \centering \footnotesize
    \tabcolsep=0.15cm
    \begin{tabular}{c||cc|cc|cc||cc|cc|cc}
        \hline
        \multirow{3}{*}{Dataset} & \multicolumn{6}{c||}{\sgdnew} & \multicolumn{6}{c}{\sgdfull} \\
        ~ & \multicolumn{2}{c|}{No Update} & \multicolumn{2}{c|}{$\textsc{ScoreDiff}$ (loss)} & \multicolumn{2}{c||}{$\textsc{ScoreRatio}$ (loss)} & \multicolumn{2}{c|}{No Update} & \multicolumn{2}{c|}{$\textsc{ScoreDiff}$ (loss)} & \multicolumn{2}{c}{$\textsc{ScoreRatio}$ (loss)}\\
        & Gap & Loss & Batch & Transfer & Batch & Transfer & Gap & Loss & Batch & Transfer & Batch & Transfer\\
        \hline
        FMNIST & 0.51 & 0.53 & 0.68 & 0.67 & 0.71 & 0.72 & 0.52 & 0.51 & 0.63 & 0.55 & 0.57 & 0.58 \\
        CIFAR-10 & 0.50 & 0.53 & 0.61 & 0.62 & 0.65 & 0.68 & 0.50 & 0.59 & 0.58 & 0.59 & 0.61 & 0.63 \\
        Purchase100 & 0.51 & 0.50 & 0.56 & 0.56 & 0.57 & 0.58 & 0.51 & 0.51 & 0.54 & 0.54 & 0.55 & 0.54 \\
        IMDb & 0.50 & 0.45 & 0.63 & 0.60 & 0.63 & 0.60 & 0.50 & 0.47 & 0.58 & 0.55 & 0.58 & 0.55 
    \end{tabular}
    \caption{(Q4) Accuracy for single update models with smallest update size.
    }
    \label{tab:single_update_accuracy}
\end{table*}

\paragraph{\diff\ vs. \ratio\ in terms of accuracy.} Focusing first on the single update setting, Table~\ref{tab:single_update_accuracy} shows attack accuracy with an update size of 1\% of the original training set.
We compare \diff\ and \ratio\ when using the standard cross-entropy loss and either the Batch or Transfer thresholding strategy (Batch and Transfer are the only ones relevant for accuracy).
We show results for both \sgdnew and \sgdfull training strategies.
The \ratio\ strategy typically outperforms \diff.  However, there are exceptions: at large update set sizes, loss difference is comparable or sometimes outperforms ratio. The two threshold selection strategies, Batch and Transfer, have comparable performance in the cases we show.
We also ran the same evaluation with LiRA as the score in \diff\ and \ratio.
LiRA consistently performs better than traditional loss, except on CIFAR-10, where there is not enough data to train sufficient shadow models for it to perform well. We find LiRA tends to be strongest with \ratio, but \diff\ performs comparably.

For the multiple update setting, there is similarly no single best choice for the attacker.  For example, loss ratio outperforms loss difference on CIFAR-10, while loss difference outperforms loss ratio on FMNIST and Purchase100, and there is little difference on IMDb.

For a specific use case, an adversary might run an experiment on a shadow updated model they themselves generate to determine whether to use \diff\ or \ratio, and which privacy score to use.   Thresholding strategies are determined by what the adversary has access to.

\begin{table}
    \small
    \centering
    \begin{tabular}{cc|cccc}
        \hline
        Dataset & $n_{\mathrm{up}}$ & Loss & Batch & Transfer & Rank \\
        \hline
        FMNIST & 10 & 0.53/0.11 & 0.78/0.16 & 0.86/0.24 & 0.82/0.28 \\
        CIFAR-10 & 250 & 0.54/0.10 & 0.85/0.17 & 0.82/0.22 & 0.69/0.46 \\
        Purchase100 & 250 & 0.53/0.11 & 0.60/0.12 & 0.63/0.07 & 0.60/0.14 \\
        IMDb & 10 & 0.58/0.12 & 0.58/0.12 & 0.56/0.32 & 0.52/0.25 \\
        \hline
        FMNIST & 80 & 0.52/0.10 & 0.74/0.15 & 0.78/0.13 & 0.69/0.25 \\
        CIFAR-10 & 2000 & 0.55/0.11 & 0.83/0.17 & 0.79/0.25 & 0.69/0.44 \\
        Purchase100 & 2000 & 0.49/0.10 & 0.63/0.13 & 0.65/0.10 & 0.61/0.16 \\
        IMDb & 80 & 0.46/0.09 & 0.67/0.13 & 0.70/0.12 & 0.59/0.25 \\
    \end{tabular}
    \caption{(Q4) Precision/recall results for single update models on \sgdfull. Rank sets the threshold to the top 10\% of the test data. We report values for two update set sizes. It is well known that no update loss obtains poor precision~\cite{leino2020stolen, carlini2021membership}.}
    \label{tab:single_update_precision}
\end{table}

\paragraph{\diff\ vs. \ratio\ in terms of precision/recall.}
The broad takeaways for precision are similar to those for accuracy. \sgdnew outperforms \sgdfull, as we can see from comparing Table~\ref{tab:single_update_precision} and a table we included in the Appendix, Table~\ref{tab:precision_only}. At $\nup=0.01n_0$, Batch achieves 90\% precision on FMNIST for \sgdnew, while the best attack achieves 86\% precision for \sgdfull. This difference is more pronounced for other datasets. Precision is larger when fewer update points are used: Transfer achieves 86\% precision with $\nup=0.01n_0$ on FMNIST, but only 78\% precision with $\nup=0.08n_0$. For precision, we have three strategies for selecting the threshold: Batch, Transfer, and also Rank.  While Rank often achieves a higher recall than the other strategies, Batch and Transfer typically achieve a higher precision.
Thus, once again, the choice of algorithm and its configurations requires experimentation.

\begin{figure*}[t]
    \centering
\begin{subfigure}{0.4\textwidth}
  \includegraphics[width=\linewidth]{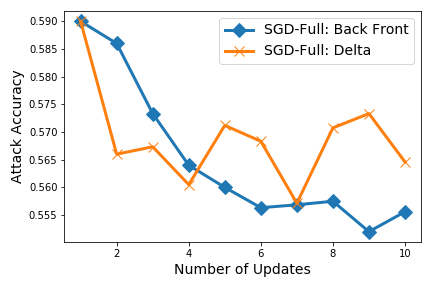}
  \caption{FMNIST}
  \label{fig:mult:fmnist_generic}
\end{subfigure}
\begin{subfigure}{0.4\textwidth}
  \includegraphics[width=\linewidth]{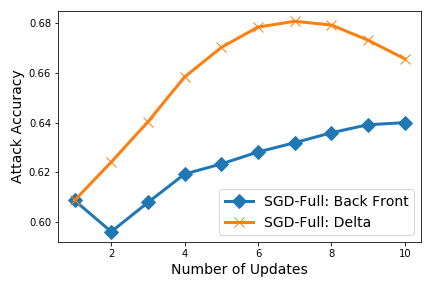}
  \caption{CIFAR-10}
  \label{fig:mult:cifar_generic}
\end{subfigure}
\begin{subfigure}{0.4\textwidth}
  \includegraphics[width=\linewidth]{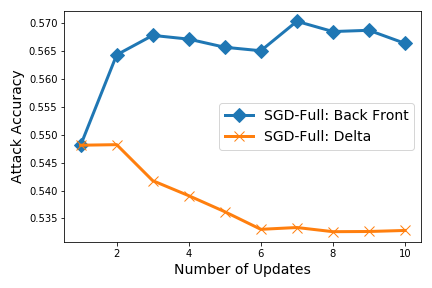}
  \caption{Purchase100}
  \label{fig:mult:purchase_generic}
\end{subfigure}
\begin{subfigure}{0.4\textwidth}
  \includegraphics[width=\linewidth]{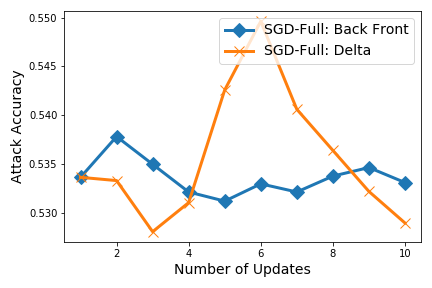}
  \caption{IMDb}
  \label{fig:mult:imdb_generic}
\end{subfigure}
\caption{(Q4) Generic accuracy for attacks with multiple updates. We select the best strategy for each case: for FMNIST, this is always loss difference (although ratio performs similarly), and is loss ratio for CIFAR-10 and Purchase100.}
\label{fig:mult_generic}
\end{figure*}

\paragraph{Delta vs. Back-Front in terms of generic accuracy.}  For the multiple update setting, we proposed two generic accuracy strategies for the attacker: Delta, which compares each adjacent pair of models; and Back-Front, which only compares the first and last models.
Figure~\ref{fig:mult_generic} compares these two strategies for \sgdfull in terms of the generic accuracy.
As before, no strategy strictly dominates. The Delta attack performs the best for CIFAR-10. On Purchase100, the Back-Front attack is best. On FMNIST, the Back-Front and Delta attack perform comparably.
As before, an adversary could decide between one strategy versus another based on experiments with a shadow model that they train themselves.

\paragraph{Q4 answer:}
These results show that the variety of algorithms and configurations that we proposed is truly needed, because no single algorithm and configuration will work best in all situations.

\subsection{Impact of Distribution Shift}
\label{sec:eval:q5}

{\bf Q5: How does distribution shift impact attack performance?}

An important characteristic of the model retraining setting we tackle in this paper is that models are updated to keep up with a shifting distribution.  How distribution shift impacts MI attack performance has been evaluated very little in prior literature, hence its investigation, in the context of our proposed attacks, is a significant contribution of our work.  The contribution consists of two components: (1) a new methodology that we developed for evaluating impact of distribution shift on MI attacks and (2) the evaluation of our proposed attack algorithms with this methodology.

\paragraph{Methodology.}
We focus on {\em subpopulation shift} and use the BREEDS framework~\cite{santurkar2020breeds}.
The BREEDS framework generates subpopulation shift by generating a hierarchy of classes and shifting between classes which are close in the hierarchy. For example, a ``dog'' class trained on images of dalmatians may struggle to recognize poodles as dogs. We adapt BREEDS to CIFAR-10 with an ``animal vs. vehicle'' binary task, and vary the animals and vehicles to simulate distribution shift.  ``Animals'' are $\lbrace$ bird, cat, deer, dog, frog, horse$\rbrace$; ``vehicles'' are $\lbrace$ airplane, automobile, ship, truck $\rbrace$. 
For the ``animal'' class, we consider a source class $s_a$ and a target class $t_a$ (both class 0 in the binary task). Likewise, the ``vehicle'' class has a source class $s_v$ and target class $t_v$ (both class 1). We write the distribution of a given class $c$ as $\mathcal{D}^c$. We consider balanced classes, so that the source distribution is $\mathcal{D}_S = \tfrac{1}{2}(\mathcal{D}^{s_a}+\mathcal{D}^{s_v})$ and the target distribution $\mathcal{D}_T = \tfrac{1}{2}(\mathcal{D}^{t_a}+\mathcal{D}^{t_v})$. The original training distribution is $\mathcal{D}_S$, and the update distribution is $\mathcal{D}_1 = (1-\alpha)\mathcal{D}_S + \alpha\mathcal{D}_T$, where the shift ratio $\alpha$ controls the strength of the distribution shift.

We run our single update attacks from Section~\ref{sec:one_update} on the preceding distribution shift methodology. Our goal is to isolate the role of distribution shift on attack performance.
Keeping everything else constant, we vary the parameter $\alpha$ and consider two settings of $(s_a \rightarrow t_a), (s_v \rightarrow t_v)$. The first, which we call {\em Hard}, is (Airplane $\rightarrow$ Automobile), (Cat $\rightarrow$ Bird). The second, which we call {\em Easy}, is (Automobile $\rightarrow$ Truck), (Cat $\rightarrow$ Dog). Hard is a distribution shift where the original model will not perform well on the new data, due to the dissimilarity between original and update classes.  Easy is a distribution shift where the original model will perform well. 

Importantly, our methodology does {\em not} measure the ability of an adversary to distinguish the old and new distributions.  This is because our MI game formulation from Section~\ref{sec:mi_updates} samples test points and update points identically, from the same distribution. Instead, we are measuring the ability to distinguish {\em shifted training points from shifted testing points}.  Intuitively, a large distribution shift requires the model to fit to the specific update points to accommodate the new distribution, thereby making the update points vulnerable to MI.
It is worth noting that the only prior work investigating distribution shift's impact on privacy, namely Zanella-B{\'e}guelin et al.~\cite{zanella2020analyzing}, shows that some outputs are more likely after a distribution shift, but their experiments cannot isolate whether this is a privacy violation or just the model adapting to the new distribution.  We thus believe that our methodology can constitute a better platform for future measurements of MI under distribution shift.

\begin{table*}
    \centering \footnotesize
    \tabcolsep=0.15cm
    \begin{tabular}{cc|cccccc|cccccc}
        \hline
        \multirow{3}{*}{Metric} & \multirow{3}{*}{Shift} & \multicolumn{6}{c|}{\sgdnew} & \multicolumn{6}{c}{\sgdfull} \\
        ~ & ~ & \multicolumn{2}{c}{$\alpha=0.2$} & \multicolumn{2}{c}{$\alpha=0.6$} & \multicolumn{2}{c|}{$\alpha=1.0$} & \multicolumn{2}{c}{$\alpha=0.2$} & \multicolumn{2}{c}{$\alpha=0.6$} & \multicolumn{2}{c}{$\alpha=1.0$}\\
        ~ & ~ & \textsc{Diff} & \textsc{Ratio} & \textsc{Diff} & \textsc{Ratio} & \textsc{Diff} & \textsc{Ratio} & \textsc{Diff} & \textsc{Ratio} & \textsc{Diff} & \textsc{Ratio} & \textsc{Diff} & \textsc{Ratio}\\
        \hline
        \multirow{2}{*}{Accuracy} & Hard & 0.551 & 0.559 & 0.602 & 0.634 & 0.600 & 0.687 & 0.574 & 0.597 & 0.578 & 0.585 & 0.595 & 0.666 \\
        & Easy & 0.685 & 0.685 & 0.574 & 0.581 & 0.560 & 0.562 & 0.597 & 0.606 & 0.570 & 0.570 & 0.598 & 0.598 \\
        \hline
        \multirow{2}{*}{Prec./Recall} & Hard & .73/.15 & .90/.18 & .94/.19 & .96/.19 & .98/.20 & .94/.19 & .63/.13 & .76/.15 & .69/.14 & .95/.19 & .80/.16 & .94/.19 \\
        & Easy & .64/.13 & .74/.15 & .58/.12 & .90/.18 & .71/.14 & .94/.19 & .54/.11 & .64/.13 & .47/.09 & .70/.14 & .62/.12 & .86/.17 \\
    \end{tabular}
    \caption{(Q5) Accuracy and precision/recall of attacks after subpopulation shift.
    $\textsc{Ratio}$, $\textsc{Diff}$ stand for \ratio, \diff, respectively, with loss score.
    $\textsc{Ratio}$ performs best, a Hard distribution shift results in better attacks as $\alpha$ increases, and \sgdnew is typically more vulnerable than \sgdfull.}
    \label{tab:shift_accuracy}
\end{table*}

\paragraph{Evaluation.}
The second component of our distribution-shift contribution is the evaluation of our proposed algorithms using the preceding methodology.
We focus on the single update attacks, fix the Batch thresholding strategy, and $\nup=50$, $n_0=5000$.
We evaluate accuracy and precision/recall for $\alpha \in \lbrace 0.2, 0.6, 1.0 \rbrace$, shown in Table~\ref{tab:shift_accuracy}. We show results for \sgdfull and \sgdnew, using both \diff and \ratio with the loss score.

{\em Accuracy:} \ratio is the most effective strategy in all cases.
For the Hard distribution shift, the faster the change in distribution (larger $\alpha$), the more effective MI is. As $\alpha$ increases from 0.2 to 1.0, accuracy on \sgdfull increases by 0.07, and accuracy increases by 0.13 on \sgdnew.
Similar to how \sgdnew is more influenced by $\nup$ in Section~\ref{sec:eval:q3}, \sgdnew is also more heavily influenced by $\alpha$.
Interestingly, accuracy decreases as $\alpha$ grows for the Easy shift. This shows that a drastic shift requires significant changes to be made to the model, leading the model to overfit to the specific update points and make them more vulnerable.

{\em Precision/Recall:} The lessons for precision/recall are similar to those for accuracy, but precision tends to be much higher than accuracy, often reaching >90\% (and as high as 98\%), especially at more drastic shifts. We still find that \ratio is generally more effective, and Hard shifts and large $\alpha$ result in higher precision.
The key difference with accuracy we note is that even the Easy shift results in large precision at high $\alpha$; this is likely because, even when the distributions are similar, there are some samples which still require the model to change significantly to learn them. 

\paragraph{Q5 answer:} A drastic distribution shift can result in significantly higher membership inference risk than a gradual shift.

\subsection{Differential Privacy}
\label{sec:eval:q6}

{\bf Q6: How does differential privacy impact our attacks?}

A rigorous strategy for preventing our attacks is by training with differential privacy. Indeed, training with $(\varepsilon, \delta)$-differential privacy, imposes some upper bound on the accuracy of \emph{any} membership-inference attacks and also on the precision at a fixed level of recall.  We evaluate the effectiveness of this strategy by training with DP-SGD~\cite{song2013stochastic, bassily2014private, abadi2016deep}, the standard algorithm for training differentially private models. This algorithm modifies standard SGD by clipping gradient norms and adding noise. We use the implementation provided by Tensorflow Privacy~\cite{tfprivacy}. We fix $\delta=10^{-4}$, and vary the noise multiplier to vary between fixed values of $\varepsilon$, computed with the accounting provided by the repository. We focus on the Fashion-MNIST dataset, with an update size of $100$, a clipping norm of 0.5, and we fix all other parameters to be the same as in previous sections. We focus on the single update setting for simplicity.

\paragraph{Protection wanes with $\epsilon$.}  
We present the results of this experiment in Figure~\ref{fig:dp}. As expected, as $\epsilon$ increases, its protection from our attacks decreases. For example, at $\epsilon=0.26$, our best attack reaches a precision of only $0.59$, but at $\epsilon=1.1$, it reaches a precision of $0.63$. The precision levels off as $\epsilon$ increases. The gap between the attacks with and without access to updates is also largest at moderate $\epsilon$ values.  At $\epsilon < 1$, there is little difference.  The no-update attacks also catch up at extremely large $\epsilon$, where the noise addition is minor, and gradient-clipping is the major difference between DP-SGD and the standard implementation of SGD.  

\begin{figure}[t]
\centering
\begin{subfigure}{0.4\textwidth}
  \includegraphics[width=\linewidth]{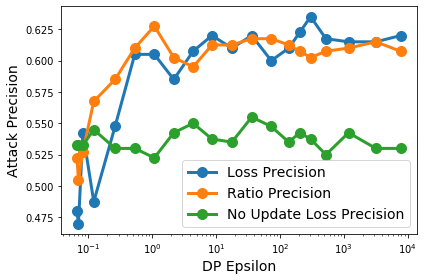}
  \caption{Attacking DP training}
  \label{fig:dp}
\end{subfigure}
\begin{subfigure}{0.4\textwidth}
  \includegraphics[width=\linewidth]{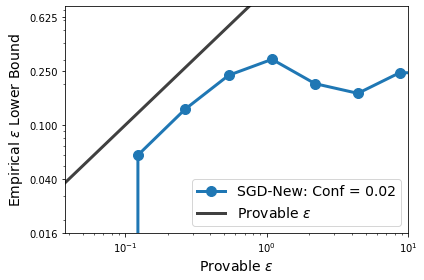}
  \caption{Auditing $\varepsilon$}
  \label{fig:epslb}
\end{subfigure}
\caption{(Q6) (a) Effectiveness of differential privacy at defending against single update attacks, as $\varepsilon$ varies for \sgdnew.
Accuracy and precision for both \sgdnew and \sgdfull observe similar behavior, where no update attacks are similar at small $\varepsilon$, but fall behind at larger $\varepsilon$.
(b) Using our attacks to audit differentially private \sgdfull and \sgdnew. All lower bounds for \sgdfull are 0. For multiple values of $\varepsilon$, \sgdnew lower bounds are a 2.0-3.6x factor from upper bounds.}
\end{figure}

\paragraph{Auditing differentially private deployments with our attacks.}  
Following \cite{jagielski2020auditing, NasrSTPC21}, we can use our results to provide empirical lower bounds on the privacy of each update algorithm, as a means of understanding how worst-case upper bounds on $\epsilon$ correspond to practical privacy against state-of-the-art attacks. We can convert the bound used by either \cite{jagielski2020auditing} or \cite{NasrSTPC21} to a bound on precision, giving that precision $p$ should be bounded by $e^{\varepsilon} / (1+e^{\varepsilon})$, or, identically, that $\varepsilon$ is lower bounded by $p/(1-p)$. Since we cannot measure $p$ directly, we follow~\cite{jagielski2020auditing} and compute conservative estimates via Clopper-Pearson confidence intervals.  We have 400 trials for each $\varepsilon$, as we train 20 models with 20 points in each trial. We report in Figure~\ref{fig:epslb} these computed $\varepsilon$ values for both \sgdnew and \sgdfull. We enforce a confidence of 98\% for each reported value.

We highlight two key takeaways.  First, as in non-private training, \sgdnew\ empirically offers less privacy protection than \sgdfull\ for those points in the update.  In fact, our attack does not refute the possibility that \sgdfull\ satisfies differential privacy with $\epsilon = 0$, although we stress that this is not robust evidence that privacy is not a concern when retraining with \sgdfull.  The second takeaway is that the provable upper bounds on the privacy of \sgdnew\ are nearly tight for moderate values of $\epsilon$.  With provable $\varepsilon$ in .12-1.09, our lower bounds are within a 2.0-3.6x factor of the theoretical upper bound.  A gap smaller than 3.6x is perhaps remarkable, since state-of-the-art attacks on standalone models trained with DP-SGD have gaps of 5-10x~\cite{jagielski2020auditing, NasrSTPC21}, suggesting that model updates represent an especially risky scenario for private model training.

\paragraph{Q6 answer:} Our results show that differential privacy is an effective protection at lower $\varepsilon$, and also that our attacks can be an effective method of empirically auditing a differential privacy deployment.

\subsection{Comparison with Chen et al. [9]}
\label{sec:eval:q7}

{\bf Q7: How do our attacks compare to those developed for unlearning in Chen et al.~\cite{chen2020machine}?}

We observe that machine unlearning can be viewed as the ``reverse operation'' of our model update setting. Then we can adapt the MI attacks designed for machine unlearning in Chen et al.~\cite{chen2020machine}, and compare them to ours. In their strategy, the adversary trains shadow models to learn an ``attack model''. This attack model takes as input some combination of the probability vectors returned by the two models, and outputs a prediction for whether the point was unlearned. They experiment with different instantiations of the attack, and we reproduce their SortedDiff attack in our setting, which takes the difference between sorted probability vectors before and after deletion. They note SortedDiff is their best attack on well-generalized models, as our models are. We run their attack with up to 30 shadow models, with each of the attack model architectures tested, and report the best attack from these. We focus on our loss score attacks with Transfer thresholds, as this fits the threat model they consider (LiRA requires training different types of shadow models, so we avoid this comparison for simplicity). Our attacks here will therefore only use a single shadow model (to set the threshold), while their attacks will be allowed up to 30.

We compare \diff\ and \ratio\ with the Chen et al.~\cite{chen2020machine} attack in Figure~\ref{fig:chenetal}, for both \sgdfull and \sgdnew, on the FMNIST dataset. We observe that both \diff\ and \ratio\ always outperform their attack, although the gap can be somewhat small, depending on the update size. We are able to do this with a single shadow model, because it is easier to identify a good threshold on a single feature, than learn a good function on 10 features\footnote{We also note that their attacks are much stronger on overfit models, and the attack they performed on MNIST models that do not exhibit a large amount of overfitting only achieved an AUC of .51.}. We corroborate this on Purchase100.

To further demonstrate the strength of the test statistics we compute, we run an experiment allowing the Chen et al. attack model to access our \diff\ and \ratio\ features, in addition to the features they use (this results in 12 features, two of ours, and 10 SortedDiff features from their paper). We allow this improved Chen et al. attack 30 shadow models, and have it learn a logistic regression attack model on these 12 features. We inspect the weights the model learns, as a way to measure how useful the features are, and find that the average weight assigned to our features is, on average, 7.5x higher than the weight assigned to one of the SortedDiff features! This speaks to the value of carefully constructing a useful test statistic, rather than attempting to learn one from a high dimensional space. Shadow models are more useful when used to improve a simple test statistic, as our results with LiRA show.

\paragraph{Q7 answer:}
\diff\ and \ratio\ are more efficient (fewer shadow models) and more effective (higher accuracy) than~\cite{chen2020machine}.

\begin{figure}[t]
\centering
\begin{subfigure}{0.4\textwidth}
  \includegraphics[width=\linewidth]{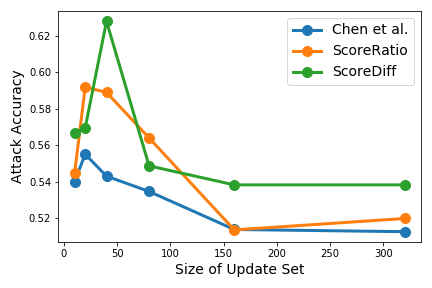}
  \caption{\sgdfull}
  \label{fig:chenetal_full}
\end{subfigure}
\begin{subfigure}{0.4\textwidth}
  \includegraphics[width=\linewidth]{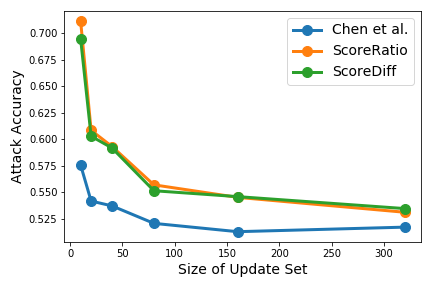}
  \caption{\sgdnew}
  \label{fig:chenetal_new}
\end{subfigure}

\caption{(Q7) Performance of Chen et al.~\cite{chen2020machine} relative to our attacks. We consistently outperform their attack, despite only using a single shadow model to set our attack threshold.}
\label{fig:chenetal}
\end{figure}
\section{Conclusions}
We have presented and evaluated MI attacks which leverage model updates. Our attacks apply to a variety of settings, including when models are repeatedly updated and when the distribution shifts over time. Our strategies are theoretically justified and empirically achieve both high accuracy and high precision. Empirically, we find the role of the update set size, the training algorithm, and any distribution shift to be key factors impacting our attacks' performance. As a general rule, the smaller the update set size, the more effective our attacks are. This holds true for attacks on single updates as well as multiple updates, and for both \sgdnew and \sgdfull. It is well known that MI attacks are more successful when training sets are smaller~\cite{SankararamanOJH09, dwork2015robust}, so our results confirm this in the model updates setting. A drastic distribution shift also improves the performance of an attack, as learning the new distribution requires fitting heavily to the update points. Zanella-B{\'e}guelin et al.~\cite{zanella2020analyzing} also notice that distribution shift results in memorization in the generative language model setting, but our results are the first to identify this as privacy leakage, rather than the model adapting to a new distribution. Finally, the specific training setup used by the learner can impact the accuracy of MI. Models updated repeatedly can be used to improve MI, and can also leak the time that a data point appeared in the training set. Learners training with \sgdnew\ are more vulnerable than \sgdfull\ at small update set sizes and when distributions shift significantly, but this trend tends to reverse as the update set gets larger and the distribution is more stable.

\bibliographystyle{plain}
\bibliography{refs}

\appendix
\section{Proof of Theorem~\ref{thm:01theory}}
In this section we prove Theorem~\ref{thm:01theory}, showing that 0/1 losses aren't helpful for model updates.  We restate the theorem and its assumptions.
\begin{itemize}
    \item The initial model $f_0$ performs equally well on update and test points: $p_{11}^u+p_{10}^u=p_{11}^t+p_{10}^t$.  This is a reasonable assumption as both sets are not in $f_0$'s training set.
    \item The updated model $f_1$ performs better on update points than on test, for both points correctly and incorrectly classified by $f_0$: $p_{11}^u>p_{11}^t,~ p_{01}^u>p_{01}^t$.  This is realistic, as models perform better on their training data than testing data.
\end{itemize}

\begin{theorem}[Theorem \ref{thm:01theory}, restated]
If the above assumptions hold, then no attack which uses the 0/1 loss of both $f_0$ and $f_1$ can achieve better accuracy than an attack which uses the 0/1 loss of $f_1$.
\end{theorem}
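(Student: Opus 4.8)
The plan is to treat this as a Bayesian two-class decision problem. In the $0/1$-loss model the adversary's observation is one of (at most) four buckets, indexed by the pair (correctness of $f_0$, correctness of $f_1$), and a full-information strategy is just a labeling of these buckets as IN ($\hat b=1$) or OUT ($\hat b=0$); an $f_1$-only strategy is the same thing but constrained to depend only on whether $f_1$ is correct. Since $b$ is uniform and the accuracy $\mathbb{E}[s_0]$ is linear in the decision, I would first note that randomization cannot help, so it suffices to compare optimal deterministic per-bucket rules. Writing $d_j\in\{0,1\}$ for the label of bucket $j$, the accuracy is
$$\mathrm{acc}(d)=\tfrac12\sum_{j}\bigl(d_j\,p^u_{j}+(1-d_j)\,p^t_{j}\bigr),$$
so the optimal full-information attack sets $d_j=1$ exactly when $p^u_j\ge p^t_j$, i.e.\ it compares update mass to test mass cell by cell.

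Next I would determine this optimal four-bucket rule explicitly and show it collapses to a function of $f_1$ alone. Assumption~(2) gives $p^u_{11}>p^t_{11}$ and $p^u_{01}>p^t_{01}$ directly, so the two cells in which $f_1$ is correct are both labeled IN. For the two cells in which $f_1$ is incorrect I would combine assumption~(1) with the normalizations $\sum_j p^u_j=\sum_j p^t_j=1$. Assumption~(1) rewrites as $p^u_{10}-p^t_{10}=-(p^u_{11}-p^t_{11})<0$, so cell $(1,0)$ has strictly less update mass than test mass and is labeled OUT. Subtracting the two normalization identities and cancelling the $f_0$-correct differences via assumption~(1) yields $p^u_{00}-p^t_{00}=-(p^u_{01}-p^t_{01})<0$, so cell $(0,0)$ is also labeled OUT.

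The payoff is then immediate: the optimal full-information rule labels IN exactly the two cells where $f_1$ is correct and OUT exactly the two cells where $f_1$ is incorrect, which is precisely the rule ``predict IN iff $f_1$ is correct.'' This rule is a function of $f_1$'s loss alone, so the best attack using both losses is already realizable as an $f_1$-only attack; hence the best $f_1$-only attack matches it, establishing the claim. I expect the only real obstacle to be the sign bookkeeping in the middle step---verifying that \emph{both} $f_1$-incorrect cells flip to OUT---since this is exactly where assumptions~(1) and~(2) must be fed through the normalization constraints; once those two strict inequalities $p^u_{10}<p^t_{10}$ and $p^u_{00}<p^t_{00}$ are in hand, the reduction to an $f_1$-only strategy requires nothing further.
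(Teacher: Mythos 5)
Your proposal is correct and follows essentially the same route as the paper's proof: both reduce to choosing the optimal IN/OUT label per cell of the $2\times 2$ table, use Assumption~(2) to label the $f_1$-correct cells IN, and combine Assumption~(1) with the normalization of the row sums to show $p^u_{10}<p^t_{10}$ and $p^u_{00}<p^t_{00}$, so the optimal four-cell rule collapses to ``IN iff $f_1$ is correct.'' The paper phrases the conclusion as an equality of the two optimal accuracies rather than as measurability of the optimal rule with respect to $f_1$ alone, but the underlying computation is identical.
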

\begin{proof}
Given only knowledge of $f_1$, the adversary may only make a decision based on whether or not $f_1$ correctly classified the point. Then a uniform decision must be made for all correctly classified points, representing a $p_{11}^u+p_{01}^u$ fraction of update points and a $p_{11}^t+p_{01}^t$ fraction of test points. Because there is a balance between update and test points, there will be more updates points in this set when $p_{11}^u+p_{01}^u>p_{11}^t+p_{01}^t$, so the optimal attack should classify IN when this inequality holds and OUT otherwise. The fraction of overall points which are correctly classified by this rule will be $\tfrac{1}{2}(\max(p_{11}^u+p_{01}^u, p_{11}^t+p_{01}^t))$. Similarly, points which are incorrectly classified by $f_1$ should be classified as IN if $p_{10}^u+p_{00}^u > p_{10}^t+p_{00}^t$ and OUT otherwise. Then the optimal attack achieves accuracy \[\tfrac{1}{2}\left[\max(p_{11}^u+p_{01}^u, p_{11}^t+p_{01}^t) + \max(p_{10}^u+p_{00}^u, p_{10}^t+p_{00}^t)\right].\]

Now, Assumption 2 states that the model improves on update data more than on test data, so $\max(p_{11}^u+p_{01}^u, p_{11}^t+p_{01}^t)=p_{11}^u+p_{01}^u$; the optimal attack should identify correctly classified points as belonging to the update set. Similarly, $\max(p_{10}^u+p_{00}^u, p_{10}^t+p_{00}^t)=p_{10}^t+p_{00}^t$, so the optimal attack should classify incorrectly classified points as belonging to the test set. This optimal attack strategy reaches an accuracy of $\tfrac{1}{2}(p_{11}^u+p_{01}^u+p_{10}^t+p_{00}^t)$.

\paragraph{Using Updates.}
Now, an attack with access to both $f_0$ and $f_1$ may make decisions based on both classifications. Following the same argument as before, we see that a $p_{11}^u$ fraction of update points are correctly classified by both models, while a $p_{11}^t$ fraction of test points are, so the optimal attack should classify these points as update if $p_{11}^u>p_{11}^t$ and test otherwise. Applying the same logic to all four possible classifications, we see that the optimal attack achieves accuracy
\begin{align*}
    \frac{1}{2}\left[\max(p_{11}^u, p_{11}^t)+\max(p_{01}^u, p_{01}^t)\right] + \frac{1}{2}\left[ \max(p_{10}^u, p_{10}^t)+\max(p_{00}^u, p_{00}^t)\right].
\end{align*}

Again, Assumption 2 gives $\max(p_{11}^u, p_{11}^t)=p_{11}^u$ and $\max(p_{01}^u, p_{01}^t)=p_{01}^u$. Assumption 1 states that training and test accuracy on $f_0$ are identical, so combining Assumption 1 and Assumption 2 gives $p_{10}^u<p_{10}^t$, so that $\max(p_{10}^u, p_{10}^t)=p_{10}^t$. Finally, Assumption 1 implies that $p_{01}^u+p_{00}^u=p_{01}^t+p_{00}^t$. Combining this with Assumption 2 gives us that $p_{00}^u<p_{00}^t$, so $\max(p_{00}^u, p_{00}^t)=p_{00}^t$. Then the accuracy of the optimal attack when given both $f_0$ and $f_1$ is 
\[
\frac{1}{2}\left[p_{11}^u+p_{01}^u+p_{10}^t+p_{00}^t\right],
\]
equivalent to the attack which did not use updates. The attack in both cases is identical - identify points correctly classified by $f_1$ as IN, and those incorrectly classified as OUT.
\end{proof}

\section{Additional Theoretical Analysis}

\subsection{Mean Estimation}
\label{ssec:meanest}

In this section we give a more detailed treatment of Example~\ref{ex:mean-est} from the Introduction.  Namely, we analyze the effect of updates in a very simple case: a single update of a rudimentary ``model'' that estimates the mean over a multi-dimensional dataset.

We consider the task of estimating the mean of samples drawn from a $d$-dimensional spherical Gaussian distribution $\mathcal{D}=\mathcal{N}(\mu,\mathbb{I}_{d\times d})$. We consider a learner $\Atrain=\Aup$ which simply outputs the sample mean of its training data, and which produces two models---the first, $\hat{\mu}_0$, is computed on a dataset $D_0$ of $n_0$ samples, and the second, $\hat{\mu}_1$, is computed with an additional $n_1$ samples. The total dataset $D=[D_0; D_1]$ contains $n=n_0+n_1$ samples. We consider an adversary who seeks to identify whether a given $v$ was contained in $D_1$.
In this setting, we can upper bound the performance of \emph{any} attack when the adversary has no access to model updates (that is, it only has access to $\hat{\mu}_1$). Next, we show that, when given access to model updates (both $\hat{\mu}_0$ and $\hat{\mu}_1$), the adversary can outperform this upper bound.

\begin{theorem} 
Consider membership-inference using mean estimation in $\mathbb{R}^d$ with an initial dataset of size $n_0$ and a single model update with a set of size $n_1$.  If
\begin{equation}
\frac{1}{2} + \sqrt{\frac{2d}{n-1}} < \Phi\left(\sqrt{\frac{d}{80(n_1-1)}}\right)
\label{eq:thm1}
\end{equation}
then there is an attacker with access to both $\hat\mu_0$ and $\hat\mu_1$ that outperforms every adversary with access to only $\hat\mu_1$.  Here $\Phi(z) = \Pr[N(0,1) \leq z]$ is the Gaussian CDF.
\label{thm:meanest}
\end{theorem}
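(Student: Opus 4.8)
The plan is to prove the separation by sandwiching two accuracies. First I would upper bound the accuracy of \emph{every} adversary that sees only $\hat\mu_1$, then exhibit a single explicit adversary that sees both $\hat\mu_0$ and $\hat\mu_1$ whose accuracy I can lower bound, and finally observe that hypothesis~\eqref{eq:thm1} is exactly the statement that the lower bound exceeds the upper bound. By translation invariance of the spherical Gaussian I would take $\mu=0$ throughout and let the attacks center at the (known) mean. The conceptual engine in both halves is the same: the query point $v$ has the same marginal law under the member and non-member hypotheses, so I condition on $v$, after which every statistic of interest becomes an \emph{exactly} Gaussian function of the remaining fresh samples, reducing everything to one-dimensional Gaussian comparisons that are then averaged over $\|v\|^2\sim\chi^2_d$.

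For the upper bound I would pass to the sufficient statistic $S=n\hat\mu_1=\sum_i x_i$. Conditioned on $v$, a non-member query makes $S\sim\mathcal N(0,nI)$ independent of $v$, while a member query (say $v=x_n$) makes $S\sim\mathcal N(v,(n-1)I)$, the $(n-1)I$ coming from the remaining background points. The Bayes-optimal accuracy of any $\hat\mu_1$-only test is therefore $\tfrac12+\tfrac12\,\mathbb E_v\big[\mathrm{TV}\big(\mathcal N(v,(n-1)I),\mathcal N(0,nI)\big)\big]$, which I would bound using Pinsker together with the closed-form Gaussian KL divergence. The Mahalanobis term contributes a leading $\|v\|^2/(2(n-1))$ and the covariance mismatch only lower-order terms, so with $\mathbb E\|v\|^2=d$ this yields an accuracy ceiling of the form $\tfrac12+c\sqrt{d/(n-1)}$; the stated constant $\sqrt2$ is deliberately loose, leaving room to absorb both the covariance term and the fluctuations of $\|v\|^2$.

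For the lower bound I would use the identity from Example~\ref{ex:mean-est}: the two released means determine the exact update-set mean $\bar\mu_{D_1}=(n\hat\mu_1-n_0\hat\mu_0)/n_1$, so the problem collapses to membership inference against a mean of only $n_1$ points. The attack thresholds the correlation $T=\langle v,\bar\mu_{D_1}\rangle$. Conditioned on $v$, $T$ is exactly one-dimensional Gaussian under both hypotheses,
\[
T\mid(\text{non-member})\sim\mathcal N\!\Big(0,\tfrac{\|v\|^2}{n_1}\Big),\qquad T\mid(\text{member})\sim\mathcal N\!\Big(\tfrac{\|v\|^2}{n_1},\tfrac{(n_1-1)\|v\|^2}{n_1^2}\Big),
\]
because only the $n_1-1$ other update points inject noise once $v$ is fixed. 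Thresholding at the midpoint $\|v\|^2/(2n_1)$ makes the conditional accuracy at least $\Phi\big(\|v\|/(2\sqrt{n_1})\big)$; using $n_1\le 2(n_1-1)$ and a standard $\chi^2_d$ lower tail to guarantee $\|v\|^2\ge d/10$ with overwhelming probability converts this into $\Phi\big(\sqrt{d/(80(n_1-1))}\big)$, which is exactly where the constant $80$ originates.

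The main obstacle is not conceptual but the constant-chasing in the second half: the member and non-member conditional variances of $T$ differ, so the midpoint threshold is not exactly optimal and the two error terms must be balanced, and the Gaussian guarantee only holds on the high-probability event $\{\|v\|^2\ge d/10\}$, so I must verify that the exponentially small tail where $\|v\|^2$ is atypically small cannot erode the expected accuracy below the stated $\Phi(\cdot)$. Once the attack's accuracy is pinned at $\Phi\big(\sqrt{d/(80(n_1-1))}\big)$ and the $\hat\mu_1$-only ceiling at $\tfrac12+\sqrt{2d/(n-1)}$, the theorem follows immediately: under~\eqref{eq:thm1} the former strictly exceeds the latter, so the two-model attacker beats every one-model attacker.
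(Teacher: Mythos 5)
Your proposal is correct and follows essentially the same route as the paper: an upper bound on all $\hat\mu_1$-only adversaries via the total-variation distance between the two conditional Gaussian laws of the released mean given $v$, and a lower bound via the explicit attack that recovers $\hat\mu_\Delta=(n\hat\mu_1-n_0\hat\mu_0)/n_1$ and thresholds the inner product with $v$ at the midpoint $\|v\|^2/(2n_1)$. The only differences are cosmetic --- you bound the TV distance by Pinsker plus the Gaussian KL formula where the paper invokes a direct Gaussian TV bound, and your reconciliation of the $n_1$ versus $n_1-1$ constant (via $n_1\le 2(n_1-1)$ after identifying the non-member variance as the binding one) is if anything slightly cleaner than the paper's.
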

The condition \eqref{eq:thm1} is always satisfied when $n \gg d \gg n_1$, in which case the left-hand side is close to $1/2$ and the right-hand side is close to $1$.  We now prove this statment, by breaking it into two lemmas. Lemma~\ref{lemma:noup} upper bounds all adversaries with access to only $\hat\mu_1$. Lemma~\ref{lemma:dot} analyzes a specific attack using $\hat\mu_0$ and $\hat\mu_1$ together. Combining these lemmas proves Theorem~\ref{thm:meanest}.

\begin{lemma}
For the task of membership inference of a sample $v$ on mean estimation, with probability $>1-\exp(-d)$ over the selection of $v$, all adversaries $\mathcal{A}$ have success rate bounded above by
\[\textsc{Acc}(\mathcal{A})=\frac{1}{2}+\frac{1}{2}\left(\sqrt{\frac{5d}{n-1}} + \frac{\sqrt{d}}{n-1}\right).\]
\label{lemma:noup}
\end{lemma}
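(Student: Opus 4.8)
The plan is to cast this as a balanced two-hypothesis test and bound its success by a total-variation distance. Fix the query point $v$. Since a fresh draw from $\mathcal{D}$ has exactly the same marginal law as a uniformly chosen training point, conditioning on $v$ leaves the prior on the IN/OUT hypotheses at $\tfrac12$, so the adversary's problem reduces to distinguishing the conditional law of its only informative observation, $\hat\mu_1$, under the two hypotheses. For a fixed $v$ the optimal accuracy of such a balanced test is $\tfrac12+\tfrac12\,d_{\mathrm{TV}}(\mathcal{D}_{\INrm},\mathcal{D}_{\OUTrm})$, where $\mathcal{D}_{\OUTrm}$ is the law of $\hat\mu_1$ when $v\notin D_1$ and $\mathcal{D}_{\INrm}$ its law when $v\in D_1$. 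A direct computation identifies both as Gaussians: $\mathcal{D}_{\OUTrm}=\mathcal N(\mu,\tfrac1n \mathbb I)$ (the empirical mean of $n$ fresh samples, independent of $v$), while freezing one summand to $v$ gives $\mathcal{D}_{\INrm}=\mathcal N\!\big(\mu+\tfrac{v-\mu}{n},\,\tfrac{n-1}{n^2}\mathbb I\big)$.

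Next I would pin down a \emph{typical} $v$. Under either hypothesis $v-\mu\sim\mathcal N(0,\mathbb I)$, so $\|v-\mu\|_2^2$ is $\chi^2$ with $d$ degrees of freedom; the standard Laurent--Massart tail bound, with deviation parameter equal to $d$, gives $\|v-\mu\|_2^2\le 5d$ with probability at least $1-e^{-d}$. Restricting to this event is exactly where the ``with probability $>1-e^{-d}$'' qualifier originates, and the factor $5$ in the stated bound is inherited from this tail.

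It then remains to bound $d_{\mathrm{TV}}(\mathcal{D}_{\INrm},\mathcal{D}_{\OUTrm})$ between two Gaussians that differ in both mean and covariance. I would split this with the triangle inequality through the intermediate law $\mathcal N(\mu,\tfrac{n-1}{n^2}\mathbb I)$, separating a pure mean shift from a pure covariance rescaling. For the mean shift (common covariance $\tfrac{n-1}{n^2}\mathbb I$, mean gap $\tfrac{v-\mu}{n}$), the Mahalanobis separation is $\|v-\mu\|_2/\sqrt{n-1}\le\sqrt{5d/(n-1)}$, and the elementary fact that equal-covariance Gaussians have $d_{\mathrm{TV}}$ at most their Mahalanobis separation yields the first term. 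For the covariance rescaling between $\mathcal N(\cdot,\tfrac{n-1}{n^2}\mathbb I)$ and $\mathcal N(\cdot,\tfrac1n\mathbb I)$, whose variance ratio is $1-\tfrac1n$, a KL computation gives $\mathrm{KL}=O(d/n^2)$, and Pinsker's inequality yields the lower-order second term $\tfrac{\sqrt d}{n-1}$. Summing the two contributions gives $d_{\mathrm{TV}}\le \sqrt{5d/(n-1)}+\sqrt d/(n-1)$, and substituting into $\tfrac12+\tfrac12 d_{\mathrm{TV}}$ produces the claimed accuracy ceiling.

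The main obstacle is the clean handling of the covariance mismatch: the two conditional laws do not share a covariance, since the IN average has one frozen summand $v$ and only $n-1$ random ones, so the same-covariance TV formula does not apply directly. The triangle-inequality decomposition is what isolates this nuisance into a genuinely lower-order contribution, and the key calculation to check carefully is that the covariance-shift $\mathrm{KL}$ is $O(d/n^2)$ rather than $O(d/n)$ -- that is, that the leading $-1/n$ terms from the trace and the log-determinant cancel. Everything else, namely the reduction to a conditional TV and the mean-shift estimate, is routine once the two Gaussian laws are written down.
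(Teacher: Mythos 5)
Your proposal is correct and follows essentially the same route as the paper's proof: reduce to a balanced hypothesis test whose optimal accuracy is $\tfrac12+\tfrac12\,d_{\mathrm{TV}}(\mathcal{D}_{\INrm},\mathcal{D}_{\OUTrm})$, identify the two conditional laws of $\hat\mu_1$ as the same pair of Gaussians, and condition on the $\chi^2$ tail event $\|v-\mu\|_2^2\le 5d$ (which is exactly where the paper's factor $5$ and the $1-e^{-d}$ probability come from). The only divergence is the final step: the paper invokes the packaged Gaussian total-variation bound $d_{\mathrm{TV}}\le\|\Sigma_0^{-1/2}(x_0-x_1)\|_2+\|\mathbb{I}-\Sigma_0^{-1/2}\Sigma_1\Sigma_0^{-1/2}\|_F$ of Devroye et al.\ in one shot, whereas you re-derive the same two terms by a triangle inequality through an intermediate Gaussian, bounding the mean shift by the Mahalanobis separation and the covariance rescaling by a KL computation plus Pinsker---and your key check, that the $1/n$ contributions cancel so the covariance KL is $O(d/n^2)$ and the second term is genuinely lower order, is the right one and does yield $\sqrt{d}/(n-1)$.
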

\begin{proof}
Notice that $\hat{\mu}_0$ is distributed as $\mathcal{N}(\mu, \tfrac{\sigma^2}{n_0}\mathbb{I})$ and $\hat{\mu}_1$ as $\mathcal{N}(\mu, \tfrac{\sigma^2}{n_0+n_1}\mathbb{I})$. An adversary seeking to distinguish between the cases where a sample $v$ is contained in $D_1$ or not must distinguish between two distributions: the distribution over means where $v$ is not contained in $D_1$, $\mathcal{D}_{\OUTrm}=\mathcal{N}(\mu, \tfrac{\sigma^2}{n}\mathbb{I})$, and the distribution over means where $v$ is contained in $D_1$, $\mathcal{D}_{\INrm}=\tfrac{1}{n}v+\mathcal{N}(\tfrac{n-1}{n}\mu, \tfrac{\sigma^2(n-1)}{n^2}\mathbb{I})$. We can upper bound the success of this adversary by a function of the total variation (TV) distance between the two distributions: $\tfrac{1}{2}+\tfrac{1}{2}TV(\mathcal{D}_{\OUTrm}, \mathcal{D}_{\INrm})$.

Recall that the TV distance between $\mathcal{N}(x_0, \Sigma_0)$ and $\mathcal{N}(x_1, \Sigma_1)$ can be upper bounded~\cite{devroye2018total} by
\begin{align*} 
TV(\mathcal{N}(x_0, \Sigma_0), \mathcal{N}(x_1, \Sigma_1)\le & \left\|\Sigma_0^{-1/2}(x_0-x_1)\right\|_2 + \\ & \left\|\mathbb{I} - \Sigma_0^{-1/2} \Sigma_1 \Sigma_0^{-1/2}\right\|_F.
\end{align*}

We use this to bound $TV(\mathcal{D}_{\OUTrm}, \mathcal{D}_{\INrm})$ as follows:
\begin{align*} 
TV(\mathcal{D}_{\OUTrm}, \mathcal{D}_{\INrm}) \le
& \left\|\left(\frac{n}{\sigma\sqrt{n-1}}\mathbb{I}\right)\left(\frac{1}{n}v-\frac{1}{n}\mu\right)\right\|_2 + \\
  & \left\|\mathbb{I} - \frac{n}{\sigma\sqrt{n-1}}\mathbb{I}\left(\frac{\sigma^2}{n}\mathbb{I}\right)\frac{n}{\sigma\sqrt{n-1}}\mathbb{I}\right\|_F \\
 ={}& \frac{\|v - \mu\|_2}{\sigma\sqrt{n-1}} + \left\|\frac{1}{n-1}\mathbb{I}\right\|_F \\
 ={}& \frac{\|v - \mu\|_2}{\sigma\sqrt{n-1}} + \frac{\sqrt{d}}{n-1}.
\end{align*}

Because $v-\mu \sim \mathcal{N}(0, \sigma^2 \mathbb{I})$, we have $\|v-\mu\|_2<\sigma\sqrt{5d}$ \cite{10.2307/2674095} except with probability at most $\exp(-d)$, which gives us
\[
TV(\mathcal{D}_{\OUTrm}, \mathcal{D}_{\INrm}) \le \sqrt{\frac{5d}{n-1}} + \frac{\sqrt{d}}{n-1}.
\]
This completes the proof.
\end{proof}

\begin{lemma}
For the task of membership inference of a sample $v$ on mean estimation with model updates, there exists an adversary $\mathcal{A}$ with success rate
\[
\textsc{Acc}(\mathcal{A})\ge \Phi\left(\sqrt{\frac{d}{80(n_1-1)}}\right).
\]
This holds with probability $>1-2\exp(-d/16)$ over the choice of $v$, and when $n_1>1$.
\label{lemma:dot}
\end{lemma}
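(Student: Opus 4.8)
The plan is to exhibit one concrete attacker and lower-bound its accuracy. The first step is to notice that from $\hat\mu_0$ and $\hat\mu_1$ the adversary can \emph{exactly} reconstruct the empirical mean of the update set in isolation: since $\hat\mu_1 = \frac{n_0\hat\mu_0 + \sum_{i\in D_1}x_i}{n}$, the quantity
\[
\hat\mu_\Delta \;=\; \frac{n\hat\mu_1 - n_0\hat\mu_0}{n_1}
\]
equals $\frac{1}{n_1}\sum_{i\in D_1}x_i$, the sample mean of just the $n_1$ update points. I would then give the adversary knowledge of $\mu$ and $\sigma$ (standard for mean-estimation MI, and harmless since we only need an existence claim), form the test statistic $T = \langle v-\mu,\ \hat\mu_\Delta-\mu\rangle$, and classify $v$ as IN exactly when $T$ exceeds a threshold $\tau$. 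This mirrors the dot-product attacks of~\cite{dwork2015robust}, with $\hat\mu_\Delta$ playing the role of the released statistic.

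Next I would compute the conditional law of $T$ with the target fixed, writing $w := v-\mu$. Under OUT, $\hat\mu_\Delta-\mu\sim\mathcal N(0,\tfrac{\sigma^2}{n_1}\mathbb I)$ is independent of $w$, so $T\sim\mathcal N\!\left(0,\ \tfrac{\sigma^2\|w\|^2}{n_1}\right)$. Under IN, decomposing $\hat\mu_\Delta-\mu = \tfrac1{n_1}w + \tfrac1{n_1}\sum_{j\ge 2}(x_j-\mu)$ with the remaining $n_1-1$ points fresh gives $T\sim\mathcal N\!\left(\tfrac{\|w\|^2}{n_1},\ \tfrac{\sigma^2(n_1-1)\|w\|^2}{n_1^2}\right)$. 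The essential point is that membership injects a positive mean shift $\tfrac{\|w\|^2}{n_1}$ into $T$ while leaving a (slightly smaller) comparable variance.

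I would then fix, say, $\tau=\tfrac{\|w\|^2}{2n_1}$ and lower-bound the accuracy $\tfrac12\Pr[T_{\mathrm{IN}}>\tau]+\tfrac12\Pr[T_{\mathrm{OUT}}\le\tau]$. Standardizing the two Gaussian tails yields $\Pr[T_{\mathrm{OUT}}\le\tau]=\Phi\!\left(\tfrac{\|w\|}{2\sigma\sqrt{n_1}}\right)$ and $\Pr[T_{\mathrm{IN}}>\tau]=\Phi\!\left(\tfrac{\|w\|}{2\sigma\sqrt{n_1-1}}\right)$; the latter argument is the larger, so the accuracy is at least $\Phi\!\left(\tfrac{\|w\|}{2\sigma\sqrt{n_1}}\right)$. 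To strip the dependence on the random draw of $v$, I would invoke a $\chi^2$ concentration bound (Laurent--Massart) on $\|w\|^2/\sigma^2\sim\chi^2_d$, giving $\|w\|^2\ge\tfrac{\sigma^2 d}{2}$ except with probability $e^{-d/16}$. Substituting gives accuracy $\ge\Phi\!\left(\sqrt{\tfrac{d}{8n_1}}\right)$, and since $8n_1\le 80(n_1-1)$ for every $n_1\ge 2$, this is at least $\Phi\!\left(\sqrt{\tfrac{d}{80(n_1-1)}}\right)$, as claimed; the crude constant $80$ absorbs all the slack, and the factor in $2e^{-d/16}$ comes from the norm-concentration step.

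The main obstacle I anticipate is the bookkeeping around the two unequal-variance Gaussians under a single threshold: one must identify which $\Phi$-argument is smaller (it is the OUT one, because in the OUT case $\hat\mu_\Delta$ averages $n_1$ fresh points, whereas in the IN case the contribution of $v$ is deterministic, leaving only $n_1-1$ random points and hence a smaller variance) and then carefully track the loose constants from both the threshold choice and the concentration bound so that the final single-$\Phi$ statement with denominator $80(n_1-1)$ is genuinely valid. The one inequality I would double-check is $8n_1\le 80(n_1-1)$ for $n_1\ge 2$, which is precisely what lets the clean constant swallow the accumulated slack.
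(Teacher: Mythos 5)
Your proposal is correct and follows essentially the same route as the paper: reconstruct $\hat\mu_\Delta$ from the two released means, threshold the dot product $\langle v-\mu,\hat\mu_\Delta-\mu\rangle$ at $\|v-\mu\|^2/(2n_1)$, and apply $\chi^2$ concentration to $\|v-\mu\|^2$. Your bookkeeping is in fact slightly cleaner than the paper's—you correctly identify the OUT case as the higher-variance (hence limiting) one and need only the lower tail of the $\chi^2$, whereas the paper's write-up misstates which side has the larger variance, though its loose constant of $80$ leaves the stated bound intact either way.
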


\begin{proof}
We consider an adversary with access to model updates, receiving two quantities. The first is the mean of $D_0$, $\hat{\mu}_0\sim\mathcal{N}(\mu, \tfrac{\sigma^2}{n_0}\mathbb{I})$. The next is the overall mean $\hat{\mu}_1$. When $v$ is not contained in $D_1$, $\hat{\mu}_1$ is distributed as $\tfrac{n_0}{n}\hat{\mu}_0 + \tfrac{n_1}{n}\mathcal{N}(\mu, \tfrac{\sigma^2}{n_1}\mathbb{I})$. When $v$ is found in $D_1$, $\hat{\mu}_1$ is distributed as $\tfrac{n_0}{n}\hat{\mu}_0 +  \tfrac{1}{n}v + \tfrac{n_1-1}{n}\mathcal{N}(\mu, \tfrac{\sigma^2}{n_1-1}\mathbb{I})$.

With both of these quantities, the adversary computes the mean of only $D_1$: $\hat{\mu}_{\Delta}=\tfrac{n}{n_1}\hat{\mu}_1 - \tfrac{n_0}{n_1}\hat{\mu}_0$. The task of determining whether $v$ is contained in $D_1$ can now be written as the task of distinguishing between the distribution of $\hat{\mu}_{\Delta}$ when $v$ is not included, $\mathcal{D}_{\Delta, \OUTrm}$, and the distribution when it is included, $\mathcal{D}_{\Delta, \INrm}$, both written below:
\begin{equation*}
\mathcal{D}_{\Delta, \OUTrm}=\mathcal{N}\left(\mu, \frac{\sigma^2}{n_1}\mathbb{I}\right) \textrm{ and }
\mathcal{D}_{\Delta, \INrm}=\frac{1}{n_1}v + \mathcal{N}\left(\frac{n_1-1}{n_1}\mu, \frac{\sigma^2(n_1-1)}{n_1^2}\mathbb{I}\right).
\end{equation*}

Now, the adversary computes $s(\hat{\mu}_{\Delta}, v) = (\hat{\mu}_{\Delta}-\mu)\cdot (v-\mu)$. 
In the OUT case, $s(\hat{\mu}_{\Delta}, v)$ is distributed as $\mathcal{N}(0, \tfrac{\|v-\mu\|_2^2\sigma^2}{n_1})$. In the IN case, it is distributed as $\mathcal{N}(\tfrac{1}{n_1}\|v-\mu\|_2^2, \tfrac{\|v-\mu\|_2^2\sigma^2(n_1-1)}{n_1^2})$.

The adversary guesses OUT if $s(\hat{\mu}_{\Delta}, v)$ is below $T=\tfrac{1}{2n_1}\|v-\mu\|_2^2$ and IN otherwise. If $\mathcal{D}_{\Delta, \INrm}$ and $\mathcal{D}_{\Delta, \OUTrm}$ had equal variance, this would be the optimal Neyman-Pearson distinguisher~\cite{neyman1933ix}; because the variances are similar, the test will still be effective. For convenience, we write $c=\|v-\mu\|_2^2\sigma^2$.

The probability the adversary succeeds when $v$ is OUT is
\[
Acc_{\OUTrm} = \Pr\left[\mathcal{N}\left(0, \tfrac{c}{n_1}\right)\le T\right],
\]
and the probability of success when $v$ is IN is 
\[
Acc_{\INrm} = \Pr\left[T \le \mathcal{N}\left(2T, \tfrac{c(n_1-1)}{n_1^2}\right)\right]=\Pr\left[\mathcal{N}\left(0, \tfrac{c(n_1-1)}{n_1^2}\right)\le T\right].
\]
The adversary achieves accuracy $Acc = \tfrac{1}{2}(Acc_{\OUTrm} + Acc_{\INrm})$.

We have $Acc_{\INrm}\le Acc_{\OUTrm}$ because of its higher variance. Then we can use proceed by computing $Acc_{\INrm}$ as a lower bound for $Acc$. To compute this probability, we notice that, due to the distribution of $v$, we have $\|v-\mu\|_2<\sigma\sqrt{5d}$ with probability $>1-\exp(-d)$, and $\|v-\mu\|_2>\sigma\sqrt{d/2}$ with probability $>1-\exp(-d/16)$ \cite{10.2307/2674095}. We can therefore lower bound $T> d\sigma^2 / 4n_1$ and upper bound $c<5d\sigma^4$, in all giving
\begin{align*}
\textsc{Acc}(\mathcal{A}) >{} & \Pr\left[\mathcal{N}\left(0, \frac{5d\sigma^4(n_1-1)}{n_1^2}\right)\le \frac{d\sigma^2}{4n_1}\right] \\
={} & \Phi\left(\sqrt{\frac{d}{80(n_1-1)}}\right)
\end{align*}
This completes the proof.
\end{proof}

\subsection{\textsc{ScoreDiff} Achieves High Accuracy}
We now show that \diff with the loss score on mean estimation achieves high accuracy when $d>>n_1$.

\begin{theorem}
Suppose $f_0$ is the mean of $D_0$, and $f_1$ is produced by taking a single gradient step from $f_0$ with a learning rate of $\eta$ on the $\ell_m$ loss. Then there is some constant $C$ and threshold $T$ such that, if $d>Cn_1$, running \diff with a threshold of $T$ reaches a membership inference accuracy of >90\% for both SGD-Full and \sgdnew.
\label{thm:gradient}
\end{theorem}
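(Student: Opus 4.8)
The plan is to reduce the \diff\ test to thresholding a single scalar correlation, compute the gap between its IN and OUT means, and control its fluctuations by a concentration inequality, so that a midpoint threshold cleanly separates the two cases once $d > Cn_1$. Throughout I adopt the Gaussian model $\mathcal{D}=\mathcal{N}(\mu,\mathbb{I}_d)$ of Section~\ref{ssec:meanest}, write $f_0=\hat\mu_{D_0}$, and set $\delta=f_1-f_0$. A direct expansion gives the \diff\ statistic
\[
S(v) = \ell_m(v,f_1) - \ell_m(v,f_0) = 2\langle \delta,\, f_0 - v\rangle + \|\delta\|_2^2 .
\]
A single gradient step on the $\ell_m$ loss averaged over a batch with mean $\bar x$ satisfies $\delta = 2\eta(\bar x - f_0)$, so $S(v) = -4\eta\,\Psi(v) + \|\delta\|_2^2$ where $\Psi(v):=\langle \bar x - f_0,\, v-f_0\rangle$. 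Since the offset $\|\delta\|_2^2$ does not depend on $v$, thresholding $S(v)$ at $T$ is equivalent to thresholding $\Psi(v)$, with a larger $\Psi$ (a larger loss decrease) signalling membership, matching the \diff\ convention. For \sgdnew\ the batch is $D_1$, so $\bar x - f_0 = \hat\mu_{D_1}-\hat\mu_{D_0}$; for \sgdfull\ the batch is $D_0\cup D_1$, giving $\bar x - f_0 = \tfrac{n_1}{n}(\hat\mu_{D_1}-\hat\mu_{D_0})$ with $n=n_0+n_1$. Thus \sgdfull\ is exactly \sgdnew\ rescaled by $n_1/n$; as this scales signal and noise identically, the signal-to-noise ratio is unchanged and it suffices to analyze \sgdnew.

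Next I would compute the first two moments of $\Psi$. Decomposing $\hat\mu_{D_0}-\mu$, $v-\mu$, and the update mean into independent Gaussian pieces---and, in the IN case, isolating the contribution $\tfrac1{n_1}(v-\mu)$ that $v$ makes to $\hat\mu_{D_1}$---every cross term vanishes in expectation, yielding $\mathbb{E}_{\OUTrm}[\Psi]=d/n_0$ and $\mathbb{E}_{\INrm}[\Psi]=d/n_0 + d/n_1$. The membership signal is therefore the gap $d/n_1$, which comes exactly from the term $\tfrac1{n_1}\|v-\mu\|_2^2\approx d/n_1$; this is precisely the dot-product signal exploited in Lemma~\ref{lemma:dot}. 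A similar accounting of the (pairwise uncorrelated) summands bounds $\mathrm{Var}(\Psi)=O(d/n_1)$ in the natural regime $n_0\ge n_1$, so the standard deviation is $O(\sqrt{d/n_1})$ and the signal-to-noise ratio is $\Theta(\sqrt{d/n_1})$. I would also verify that the $v$-independent offset $\|\delta\|_2^2$ concentrates: its mean is absorbed into the choice of $T$, while its fluctuation is of relative order $1/\sqrt d$ against the signal and is hence negligible.

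Finally I would choose $T$ so that $S(v)<T$ corresponds to $\Psi(v)$ exceeding the midpoint $d/n_0 + d/(2n_1)$, and bound the two error probabilities. Because $\Psi$ is a quadratic-plus-linear form in Gaussian vectors, I would apply the Hanson--Wright inequality together with the high-probability bound $\tfrac d2 \le \|v-\mu\|_2^2 \le 5d$ used in Lemma~\ref{lemma:dot} to show that each of $\Pr_{\INrm}[\Psi < \text{midpoint}]$ and $\Pr_{\OUTrm}[\Psi > \text{midpoint}]$ falls below $0.1$ as soon as the ratio $\tfrac12\sqrt{d/n_1}$ exceeds a universal constant, i.e.\ once $d>Cn_1$; averaging the two cases gives accuracy $>90\%$ for both \sgdnew\ and \sgdfull. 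The main obstacle is the variance and concentration control of this third and fourth step: the statistic mixes a chi-squared term $\|v-\mu\|_2^2$, bilinear terms, and the nuisance randomness of $f_0$, so extracting a clean tail bound with an explicit constant $C$ (and confirming the $\|\delta\|_2^2$ offset is genuinely lower order) is the delicate part, whereas the mean computation and the \sgdfull-to-\sgdnew\ reduction are essentially bookkeeping.
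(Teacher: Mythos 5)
Your proposal follows essentially the same route as the paper's proof: the same reduction of \diff\ to the inner product $\langle \mu_{\uprm}-f_0,\,x-f_0\rangle$ (the $\|\delta\|_2^2$ offset being $v$-independent), the same observation that \sgdfull\ is \sgdnew\ with learning rate rescaled by $n_1/n$ (the paper's Lemma~\ref{lemma:gradient}, using that the gradient over $D_0$ vanishes at $f_0$), the same IN/OUT mean gap of order $d/n_1$ with variance $O(d/n_1)$, and the same midpoint threshold. The only difference is cosmetic: the paper decomposes the statistic into explicit $\chi^2_d$ pieces and applies Chebyshev's inequality, which already suffices for a fixed $90\%$ accuracy target, whereas you propose Hanson--Wright, which would give sharper tails but is not needed here.
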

In the following, we write the mean of $D_1$ as $\mu_{\uprm}$. When excluding a sample $x$, we write the mean of $D_1/x$ as $\mu_{\restrm}$.
We begin by proving Lemma~\ref{lemma:gradient}.
\begin{lemma}
For the task of mean estimation with an update $D_1$, when $f_0$ is the mean of the original dataset $D_0$, a gradient step with learning rate $\eta$ using SGD-Full is equal to a gradient step using \sgdnew\ with a learning rate of $\eta'=\tfrac{n_1\eta}{n_0+n_1}$.
\label{lemma:gradient}
\end{lemma}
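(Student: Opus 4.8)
The plan is to write both single gradient steps explicitly and exploit the fact that $f_0$ is the exact minimizer of the squared loss on $D_0$. Recall that for the $\ell_2^2$ loss $\ell^m(x,f)=\|f-x\|_2^2$ we have $\nabla_f \ell^m(x,f) = 2(f-x)$, and that a batch update averages this gradient over the batch and then scales by the learning rate.

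First I would compute the \sgdfull step, whose batch is the entire dataset $D_0 \cup D_1$ of size $n_0+n_1$:
\begin{equation*}
f_1^{\mathrm{full}} = f_0 - \frac{\eta}{n_0+n_1}\sum_{x\in D_0\cup D_1} 2(f_0-x).
\end{equation*}
The key step is to split this sum into a contribution from $D_0$ and one from $D_1$. Because $f_0 = \frac{1}{n_0}\sum_{x\in D_0} x$ is exactly the mean of $D_0$, the $D_0$ term vanishes: $\sum_{x\in D_0}(f_0-x) = n_0 f_0 - \sum_{x\in D_0} x = 0$. This leaves only the update-set contribution $n_1(f_0-\mu_{\uprm})$, where $\mu_{\uprm}$ is the mean of $D_1$, so that
\begin{equation*}
f_1^{\mathrm{full}} = f_0 - \frac{2\eta\, n_1}{n_0+n_1}\,(f_0 - \mu_{\uprm}).
\end{equation*}

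Next I would compute the \sgdnew step with learning rate $\eta'$, whose batch is $D_1$ of size $n_1$:
\begin{equation*}
f_1^{\mathrm{new}} = f_0 - \frac{\eta'}{n_1}\sum_{x\in D_1} 2(f_0-x) = f_0 - 2\eta'\,(f_0-\mu_{\uprm}).
\end{equation*}
Equating the two expressions and matching the coefficient of $(f_0-\mu_{\uprm})$ gives $\frac{\eta\, n_1}{n_0+n_1} = \eta'$, which is precisely the claimed relationship.

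The only real content is the vanishing of the $D_0$ gradient term at $f_0$; everything else is routine algebra, so there is no genuine obstacle here. The observation on which the whole lemma rests is that $\ell^m$ is minimized at the sample mean, so starting \sgdfull from $f_0$ effectively only ``sees'' the new data $D_1$---the $D_0$ points exert no net gradient---and the two updates then differ only in how the batch average rescales the learning rate, namely by the factor $n_1/(n_0+n_1)$.
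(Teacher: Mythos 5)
Your proof is correct and follows essentially the same route as the paper's: both arguments hinge on the observation that the $D_0$ gradient terms sum to zero because $f_0$ is the minimizer of $\ell^m$ on $D_0$, leaving only the rescaled $D_1$ contribution. The remaining coefficient matching is identical to the paper's computation.
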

\begin{proof}
In \sgdnew, the gradient step is performed on $D_1$ is
\[
-2\eta\sum_{x_i\in D_1}\frac{f_0-x_i}{\nup}=2\eta(\mu_{\uprm}-f_0).
\]
In SGD-Full, the gradient step on $D_0\cup D_1$ is
\[
-2\eta\sum_{x_i\in D_0\cup D_1}\frac{f_0-x_i}{n_0+\nup}=\frac{2\nup\eta}{n_0+\nup}(\mu_{\uprm}-f_0),
\]
as the gradient on $D_0$ adds to 0, because $f_0$ is the minimizer of $\ell_m$ on $D_0$. We see that these gradient steps are rescalings of each other, as we wanted to show.
\end{proof}

Having proven Lemma~\ref{lemma:gradient}, we can now prove Theorem~\ref{thm:gradient}, by fixing an $\eta$ and analyzing the loss difference with \sgdnew.

\begin{proof}
We write the loss for a fixed $x$ and $\eta$, and will later consider the cases where $x$ is IN $D_1$ and where $x$ is a test point, OUT of $D_1$.

\begin{align*}
&\ell^m(f_1, x) \\ = & \left<f_1 - x, f_1 - x\right> \\
= & \left<(f_1 - f_0) + (f_0 - x), (f_1 - f_0) + (f_0 - x)\right> \\
= & \|f_1-f_0\|_2^2 + \|f_0-x\|_2^2 + 2\left<f_1 - f_0, f_0 - x\right>\\
= & \ell^m(f_0, x_i) + \|2\eta(\mu_{\uprm} - f_0)\|_2^2 - 4\eta\left<\mu_{\uprm}-f_0, x-f_0\right>.
\end{align*}

Now, the norm $\|2\eta(\mu_{\uprm} - f_0)\|_2^2$ can be computed from $f_0$ and $f_1$, so we consider only the distribution of the rightmost term $d=- 4\eta\left<\mu_{\uprm}-f_0, x-f_0\right>$, showing that this is smaller for update points than for test points. In the OUT case, we write $d$ as $d_{\OUTrm}$, which is distributed as
\begin{align*}
&d_{\OUTrm} \\ 
= & -4\eta\left(\left<\mu_{\uprm}-\mu, x-\mu\right>+\|f_0-\mu\|_2^2-\left<x-\mu, f_0-\mu\right>-\right.\\
& ~~~~~~~~~~\left.\left<\mu_{\uprm}-\mu, f_0-\mu\right>\right)\\
= & -4\eta\left(\tfrac{\sigma^2}{n_0}X_1+\sigma^2\sqrt{\tfrac{n_0+n_1}{n_0n_1}}\left(X_2-X_3\right)+\tfrac{\sigma^2}{\sqrt{n_0n_1}}\left(X_4-X_5\right)\right),
\end{align*}
where each of the $X_i$ are independent samples from a Chi square distribution $\chi^2_d$. 
The mean of $d_{\OUTrm}$ is $\tfrac{-4\eta}{n_0}d\sigma^2$, and its variance is $32d\eta^2\sigma^4\left(\tfrac{n_1+2n_0^2+2n_1n_0+2n_0}{n_0^2n_1}\right)=O\left(d\eta^2\sigma^4\left(\tfrac{1}{n_0}+\tfrac{1}{n_1}\right)\right)$.

In the IN case, we write $d$ as $d_{\INrm}$, distributed as
\begin{align*}
&d_{\INrm} \\= & -4\eta\left(\left<\mu_{\uprm}-\mu, x-\mu\right>+\|f_0-\mu\|_2^2-\left<x-\mu, f_0-\mu\right>-\right.\\
& ~~~~~~~~~~\left.\left<\mu_{\uprm}-\mu, f_0-\mu\right>\right)\\
= & -4\eta\left(\tfrac{1}{n_1}\|x-\mu\|_2^2+\tfrac{\sigma^2}{n_0}X_1+\tfrac{\sigma^2(n_1+1)}{n_1\sqrt{n_0}}(X_2-X_3)\right. \\
& ~~~~~~~~~~~\left.\tfrac{\sigma^2(n_1-1)}{n_1\sqrt{n_1}}(X_4-X_5)+\tfrac{\sigma^2(n_1-1)}{n_1\sqrt{n_0n_1}}(X_6-X_7)\right),
\end{align*}
where each of the $X_i$ are independent samples from a Chi square distribution $\chi^2_d$.
The mean of $d_{\INrm}$ is $\tfrac{-4\eta}{n_0}d\sigma^2-\tfrac{4\eta}{n_1}\|x-\mu\|_2^2$, and its variance is $O\left(d\eta^2\sigma^4\left(\tfrac{1}{n_0}+\tfrac{1}{n_1}\right)\right)$.

Now, the difference in means for $d_{\OUTrm}$ and $d_{\INrm}$ is $4\eta\|x-\mu\|_2^2/n_1$, which is distributed as $4\eta\chi^2_d/n_1$, which we can bound below by $d\eta\sigma^2/n_1$ with probability $>1-\exp(-d/16)$ \cite{10.2307/2674095} over the randomness of selecting $x$. Then we can consider a threshold of $\tfrac{1}{2}(\mathbb{E}[d_{\OUTrm}]+\mathbb{E}[d_{\INrm}])\ge \mathbb{E}[d_{\OUTrm}]+0.5d\eta\sigma^2/n_1$. The success at distinguishing between the IN and OUT case is then bounded below by $Acc>\Pr\left[d_{\OUTrm}<\mathbb{E}[d_{\OUTrm}]+0.5d\eta\sigma^2/n_1\right]$. To measure this probability, notice that the variance of $d_{\OUTrm}$ is $Var[d_{\OUTrm}]=O\left(\tfrac{d\eta^2\sigma^4}{n_1}\right)$, if we assume $n_1<n_0$. Then there is some $C_{\OUTrm}$ for which the variance $Var[d_{\OUTrm}]<\tfrac{C_{\OUTrm}d\eta^2\sigma^4}{n_1}$. Then we can use Chebyshev's Inequality\footnote{For a random variable $Y$, $\Pr[|Y-\mathbb{E}[Y]|\ge k\sqrt{Var(Y)}]\le\tfrac{1}{k^2}$.} to bound our success probability from below as
\begin{align*}
   Acc=&\Pr\left[d_{\OUTrm}<\mathbb{E}[d_{\OUTrm}]+\sqrt{Var[d_{\OUTrm}]}\sqrt{\tfrac{d}{8C_{\OUTrm}n_1}}\right]\\
   \ge & 1-\frac{8C_{\OUTrm}n_1}{d}.
\end{align*}
The analysis for the IN case is identical, although requires a different constant $C_{\INrm}$, due to the different variance term. Then, to guarantee $>90\%$ accuracy as in the theorem statement, we can take $K=80\max(C_{\INrm}, C_{\OUTrm})$.
\end{proof}

\section{Extended Experiments}
\label{app:more_exp}

\begin{table*}
    \centering
    \begin{tabular}{c|cccc|cccc}
        \hline
        \multirow{2}{*}{Dataset} & \multicolumn{4}{c|}{$\nup=0.01n_0$} & \multicolumn{4}{c}{$\nup=.08n_0$}\\
        & Loss & Batch & Transfer & Test & Loss & Batch & Transfer & Test \\
        \hline
        FMNIST & .53/.11 & .90/.18 & .80/.48 & .82/.38 & .51/.10 & .71/.14 & .83/.09 & .66/.20 \\
        CIFAR-10 & .47/.09 & .92/.18 & .90/.06 & .83/.46 & .49/.10 & .83/.17 & .84/.14 & .77/.36\\
        Purchase100 & .54/.11 & .70/.14 & .77/.09 & .68/.18 & .49/.10 & .54/.11 & .54/.22 & .54/.12 \\
        IMDb & .50/.10 & .83/.17 & .70/.34 & .75/.22 & .52/.10 & .70/.14 & .74/.14 & .69/.16   
    \end{tabular}
    \caption{Precision/recall results for single update models on \sgdnew. Loss, Batch, Transfer, and Rank are defined in Table~\ref{tab:single_update_accuracy}.  We report for two $\nup$ values.}
    \label{tab:precision_only}
\end{table*}
\begin{algorithm}
\KwData{Test Sample $x$, Estimated Mean $\hat{\mu}$, True Mean $\mu$, True Variance $\sigma^2$, Dimensions $d$, Dataset Size $n$}

\SetKwFunction{NeyPea}{NeymanPearson}
\SetKwProg{Fn}{Function}{:}{}
\Fn{\NeyPea{$x, \hat{\mu}, \mu, \sigma, d, n$}}{
    $p_{\INrm} = \left(\frac{n-1}{2\pi\sigma^2}\right)^{d/2}\exp\left(-\tfrac{n-1}{2\sigma^2}\|\hat{\mu}-\tfrac{n-1}{n}\mu-\tfrac{1}{n}x\|_2^2\right)$\\
    $p_{\OUTrm} = \left(\frac{n}{2\pi\sigma^2}\right)^{d/2}\exp\left(-\tfrac{n}{2\sigma^2}\|\hat{\mu}-\mu\|_2^2\right)$ \\
    \Return $\mathbbm{1}(p_{\INrm} > p_{\OUTrm})$
}
\caption{Neyman-Pearson Optimal~\cite{neyman1933ix} Distinguisher for Mean Estimation}
\label{alg:neypea}
\end{algorithm}
\paragraph{Mean Estimation}

\begin{figure}
  \centering
  \includegraphics[width=.4\linewidth]{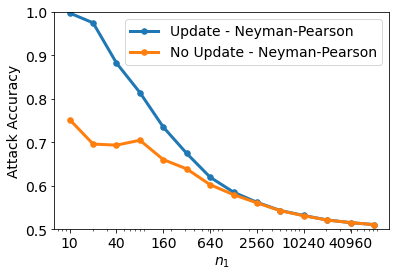}
\caption{Demonstrating the power of updates. Here, we use $n_0=200$, $d=250$, $\mu=0$, $\sigma = 0.1$, and vary $n_1$ along the x axis, reporting performance averaged over 60 trials.}
\label{fig:neypea}
\end{figure}

To validate the improved performance of model updates for mean estimation, we test attacks on the mean estimation setup described earlier in the Appendix. For these experiments, we set $\mu=\mathbf{0}$, $n_0=200$, $d=250$, $\sigma=0.1$, and we vary $n_1$. We experiment an attack which leverages updates, and one which doesn't.

In the no update case, we run the Neyman-Pearson optimal attack~\cite{neyman1933ix}, as described in Algorithm~\ref{alg:neypea}. This attack, when provided a $\hat{\mu}_1$, computes the PDF values $p_{\OUTrm}=\mathcal{D}_{\OUTrm}(\hat{\mu}_1)$ and $p_{\INrm}=\mathcal{D}_{\INrm}(\hat{\mu}_1)$, and returns IN ($g_0=1$) if $p_{\INrm}>p_{\OUTrm}$ and OUT ($g_0=0$) otherwise. For Algorithm~\ref{alg:neypea}, we set $\hat{\mu}=\hat{\mu}_1$ and $n=n_0+\nup$.

In the update case, we run the corresponding Neyman-Pearson optimal attack, which is equivalent to the no update Neyman-Pearson optimal attack, replacing $\hat{\mu}_1$ with $\hat{\mu}_{\Delta}$. When provided $\hat{\mu}_1$ and $\hat{\mu}_0$, the attack computes the mean of $D_1$ as $\hat{\mu}_{\Delta}=\tfrac{n}{n_1}\hat{\mu}_1 - \tfrac{n_0}{n_1}\hat{\mu}_0$, and then computes the PDF values $p_{\OUTrm}=\mathcal{D}_{\Delta, \OUTrm}(\hat{\mu}_1)$ and $p_{\INrm}=\mathcal{D}_{\Delta, \INrm}(\hat{\mu}_1)$, where the distributions $\mathcal{D}_{\Delta, \OUTrm}$ and $\mathcal{D}_{\Delta, \INrm}$ are defined in the proof of Lemma~\ref{lemma:dot}. Then the attack returns IN ($g_0=1$) if $p_{\INrm}>p_{\OUTrm}$ and OUT ($g_0=0$) otherwise. For Algorithm~\ref{alg:neypea}, we set $\hat{\mu}=\hat{\mu}_{\Delta}$ and $n=\nup$.

In Figure~\ref{fig:neypea}, we empirically evaluate these attacks. The immediate takeaway is that using model update never results in worse performance than not using updates. However, notice that this message holds in a variety of scenarios, even those which are not covered by our theoretical analysis. Because $d=250$ and $n_0=200$, when $n_1<50$, we see that even when $d>n$, the model update attack outperforms the no update attack, a setting our analysis performs poorly in. Also, even when $n_1$ is much larger than $n_0$, we see the attack which uses updates does not perform poorly relative to the attack which does not. The improved performance from using model updates is robust to a wide range of parameter settings.

\begin{figure}
    \centering
    \begin{subfigure}{0.4\textwidth}
    \includegraphics[width=\linewidth]{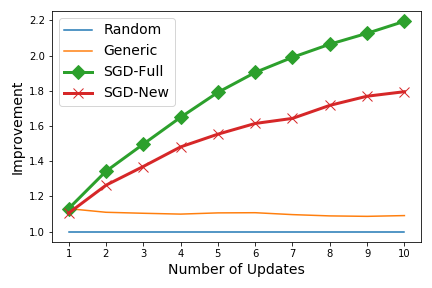}
    \caption{Specific Accuracy}
    \label{fig:specific100}
    \end{subfigure}
    \begin{subfigure}{0.4\textwidth}
    \includegraphics[width=\linewidth]{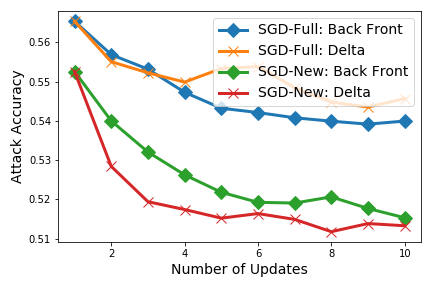}
    \caption{Generic Accuracy}
    \label{fig:generic100}
    \end{subfigure}
    \caption{Specific and generic accuracy of multiple update attacks on FMNIST with $\nup=100$, using loss difference. As in Section~\ref{sec:one_update}, \sgdnew\ attacks perform worse relative to \sgdfull.}
\label{fig:multi100}
\end{figure}

\begin{figure}
    \centering
    \begin{subfigure}{0.4\textwidth}
    \includegraphics[width=\linewidth]{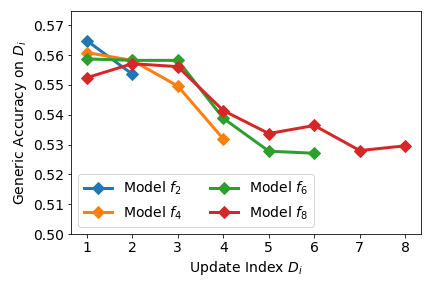}
    \caption{\sgdfull}
    \label{fig:generic_each_full}
    \end{subfigure}
    \begin{subfigure}{0.4\textwidth}
    \includegraphics[width=\linewidth]{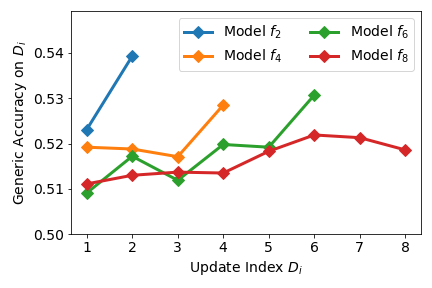}
    \caption{\sgdnew}
    \label{fig:generic_each_only}
    \end{subfigure}
\caption{Generic accuracy for each update index used to train a given model $f_i$. Results shown for the Back-Front attack with loss difference on FMNIST with \sgdfull and \sgdnew at $\nup=100$.}
\label{fig:generic_each}
\end{figure}

\paragraph{Section~\ref{sec:eval:q1} Experiments - Single Update.}
We report the precision and recall for \sgdnew\ attacks in Table~\ref{tab:precision_only}. These precision values are even higher than precisions reached by SGD-Full presented in Table~\ref{tab:single_update_precision}.


\paragraph{Section~\ref{sec:eval:q4} Experiments - Multiple Updates.}
We present in Figures \ref{fig:specific100} and \ref{fig:generic100} the results for attacks on FMNIST when $\nup=100$, to compare against the results from Section~\ref{sec:mult_update}, which use $\nup=10$.

In Figure~\ref{fig:specific100}, we show that, at larger update set sizes, attacks on SGD-Full can outperform attacks on \sgdnew, in line with experiments from Section~\ref{sec:one_update}. The specific accuracy of attacks with $\nup=100$ is also smaller than for attacks with $\nup=10$, which also corroborates our experiments in Section~\ref{sec:one_update}.

In Figure~\ref{fig:generic100}, we make similar observations to those in Figure~\ref{fig:specific100}. Attacks are less powerful at large update set sizes, and attacks on SGD-Full perform better relative to attacks on \sgdnew.

\end{document}